\newcommand{\printfnsymbol}[1]{%
  \textsuperscript{\@fnsymbol{#1}}%
}
\definecolor{Gred}{RGB}{219, 50, 54}
\definecolor{Ggreen}{RGB}{60, 186, 84}
\definecolor{Gblue}{RGB}{72, 133, 237}
\definecolor{Gyellow}{RGB}{247, 178, 16}
\definecolor{ToCgreen}{RGB}{0, 128, 0}
\definecolor{myGold}{RGB}{231,141,20}
\definecolor{myBlue}{rgb}{0.19,0.41,.65}
\definecolor{myPurple}{RGB}{175,0,124}
\definecolor{niceRed}{RGB}{153,0,0}
\definecolor{niceRed}{RGB}{190,38,38}
\definecolor{blueGrotto}{HTML}{059DC0}
\definecolor{royalBlue}{HTML}{057DCD}
\definecolor{navyBlueP}{HTML}{0B579C}
\definecolor{limeGreen}{HTML}{81B622}
\definecolor{nicePink}{RGB}{247,83,148}
\def\compactify{\itemsep=0pt \topsep=0pt \partopsep=0pt \parsep=0pt}
\let\latexusecounter=\usecounter
\definecolor{myC}{rgb}{0, 255, 255}
\definecolor{myY}{rgb}{204, 204, 0}
\definecolor{myM}{rgb}{255, 0, 255}
\definecolor{secinhead}{RGB}{249,196,95}
\definecolor{lgray}{gray}{0.8}
\newtheorem{theorem}{Theorem} 
\newtheorem*{theorem*}{Theorem} 
\newtheorem*{proposition*}{Proposition} 
\newtheorem{lemma}[theorem]{Lemma}
\newtheorem{proposition}[theorem]{Proposition}
\newtheorem{corollary}[theorem]{Corollary}
\newtheorem{definition}[theorem]{Definition}
\newtheorem{remark}[theorem]{Remark}
\renewcommand{\Pr}{\mathop{\bf Pr\/}}
\newcommand{\E}{\mathop{\bf E\/}}
\newcommand{\poly}{\textnormal{poly}}
\newcommand{\reals}{\mathbb R}
\newcommand{\nats}{\mathbb N}
\newcommand{\calA}{\mathcal{A}}
\newcommand{\calC}{\mathcal{C}}
\newcommand{\calD}{\mathcal{D}}
\newcommand{\calE}{\mathcal{E}}
\newcommand{\calH}{\mathcal{H}}
\newcommand{\calO}{\mathcal{O}}
\def\<{\langle}
\def\>{\rangle}
\def\wt{\widetilde}
\def\wh{\widehat}
\def\vec{\bm}
\renewenvironment{abstract}{%
	\if@twocolumn
	\section*{\abstractname}%
	\else 
	\begin{center}%
		{\bfseries \large\abstractname\vspace{\z@}}
	\end{center}%
	\quotation
	\fi}
{\if@twocolumn\else\endquotation\fi}
\title{Replicable Bandits\footnote{Authors are listed alphabetically.}}
\author{%
   Hossein Esfandiari\\
    Google Research\\
  \texttt{esfandiari@google.com} \\
   \and
  Alkis Kalavasis \\
    National Technical University of Athens\\
  \texttt{kalavasisalkis@mail.ntua.gr} \\
  \and
   Amin Karbasi\\
    Yale University\\
  \texttt{amin.karbasi@yale.edu} \\
  \and
  Andreas Krause\\
    ETH Zurich\\
  \texttt{krausea@ethz.ch} \\
  \and
  Vahab Mirrokni\\
    Google Research\\
  \texttt{mirrokni@google.com} \\
  \and
   Grigoris Velegkas\\
    Yale University\\
  \texttt{grigoris.velegkas@yale.edu} \\
}
\date{\today}
\begin{document}

\maketitle

\begin{abstract}
In this paper, we introduce the notion of replicable policies in the context of stochastic bandits, one of the canonical problems in interactive learning. A policy in the bandit environment is called replicable if it pulls, with high probability, the \emph{exact} same sequence of arms in two different and independent executions (i.e., under independent reward realizations). We show that not only do replicable policies exist, but also they achieve almost the same optimal (non-replicable) regret bounds in terms of the time horizon. More specifically, in the stochastic multi-armed bandits setting, we develop a policy with an optimal problem-dependent regret bound whose dependence on the replicability parameter is also optimal. Similarly, for stochastic linear bandits (with finitely and infinitely many arms) we develop replicable policies that achieve the best-known problem-independent regret bounds with an optimal dependency on the replicability parameter. Our results show that even though randomization is crucial for the exploration-exploitation trade-off, an optimal balance can still be achieved while pulling the exact same arms in two different rounds of executions. 
\end{abstract}
\newpage

\section{Introduction}
\label{sec:reproducibility}

In order for scientific findings to be valid and
reliable, the experimental process must be repeatable, and must provide coherent results and conclusions across these repetitions. In fact,
lack of replicability has been a major issue
in many scientific areas, commonly referred 
to as the ``reproducibility crisis''; a 2016 survey that appeared in Nature \cite{baker20161} revealed that more than 70\% of researchers failed in their attempt to reproduce another researcher’s experiments. What is even more concerning is that over 50\% of them failed to reproduce their own findings. Similar concerns have been raised by the machine learning community, e.g., the ICLR 2019 Reproducibility Challenge \cite{pineau2019iclr} and NeurIPS 2019 Reproducibility Program \cite{pineau2021improving}, due to the exponential increase in the number of publications and the reliability of the findings. 

The aforementioned empirical evidence has recently led to theoretical studies and rigorous definitions of replicability (and reproducibility). In particular,  the works of \cite{impagliazzo2022reproducibility} and \cite{ahn2022reproducibility} considered replicability as an algorithmic property through the lens of (offline) learning and convex optimization, respectively. Also, \cite{ghazi2021user} proposed, in the context of differential privacy, the notion of pseudo-global stability which is essentially equivalent to the definition of \cite{impagliazzo2022reproducibility}. In a similar vein,
in the current work, we introduce the notion of replicability in the context of interactive learning and decision making. In particular, we study replicable policy design for the fundamental setting of stochastic bandits.

A multi-armed bandit (MAB) is a one-player game that is played over $T$ rounds where there is a 
set of different arms/actions $\calA$ of size $|\calA| = K$ (in the more general case of linear bandits, we can consider even an infinite number of arms).
In each round $t = 1,2, \ldots, T$, the player pulls an arm $a_t \in \calA$  
and receives a corresponding reward $r_{t}$. In the stochastic setting, the rewards of each arm are sampled in each round independently, from some fixed but unknown,
distribution supported on $[0,1]$. Crucially, each arm has a potentially different reward distribution, but the distribution of each arm is fixed over time. A bandit algorithm $\mathbb A$ at every round $t$ takes as input the sequence of arm-reward pairs that it has seen so far, i.e., $(a_1,r_1),\ldots,(a_{t-1},r_{t-1})$, then uses (potentially) some internal randomness $\xi$ to pull an arm $a_t \in \calA$ and, finally, observes the associated reward $r_t \sim \calD_{a_{t}}$. 


We propose the following natural notion of a replicable bandit algorithm, which is inspired by the definition of \cite{impagliazzo2022reproducibility}\footnote{Initially, this property was called ``reproducibility'', but it was later pointed that the correct term is ``replicability''.}. Intuitively, a bandit algorithm is replicable if two distinct executions of the
algorithm, with internal randomness fixed between
 both runs, but with independent reward realizations, give the exact same sequence of played arms, with high probability. More formally, we have the following definition.

\begin{definition}
[Replicable Bandit Algorithm]
Let $\rho \in [0,1]$.
We call a bandit algorithm $\mathbb A$ $\rho$-replicable in the stochastic setting if 
for any distribution $\calD_{a_j}$ over $[0,1]$ of the rewards of the $j$-th arm $a_j \in \calA$, 
and
for any two executions of $\mathbb A$, where the internal randomness $\xi$ is shared across the executions, it holds that
\[
\Pr_{\xi, \vec{r^{(1)}}, \vec{r^{(2)}}}
\left[ \left(a_1^{(1)},\ldots, a_T^{(1)}\right) = \left(a_1^{(2)}, \ldots, a_T^{(2)} \right) \right] \geq 1- \rho\,.
\]
Here, $a_t^{(i)} = \mathbb A(a_1^{(i)},r_1^{(i)},...,a_{t-1}^{(i)},r_{t-1}^{(i)}; \xi)$ is the $t$-th action taken by the algorithm $\mathbb A$ in execution $i \in \{1,2\}$. 
\end{definition}

We remark that replicable algorithms are
predictable, a property which is very desirable when it comes to deploying them in practical applications. In theoretical computer science it is very convenient for algorithm designers to use randomness. However, policy makers are hesitant to use decision-making algorithms whose behavior is brittle and depends heavily on the stochasticity of the environment and its own randomness. The reason why we allow for some fixed internal randomness is that the algorithm designer has control over it, e.g., they can use the same seed for their (pseudo-)random generator between two executions. Clearly, naively designing a replicable bandit algorithm is not quite challenging. For instance, an algorithm that always pulls the same arm or an algorithm that plays the arms in a particular random sequence determined by the shared random seed $\xi$ are both replicable. The caveat is that the performance of these algorithms in terms of expected regret will be quite poor. In this work, we aim to design bandit algorithms which are replicable and enjoy small expected regret. In the stochastic setting, the (expected) regret after $T$ rounds is defined as
\[
\E[R_T] = T \max_{a \in \calA} \mu_a - \E \left[\sum_{t = 1}^T \mu_{a_{t}}\right]\,,
\]
where $\mu_a = \E_{r \sim \calD_a} [r]$ is the mean reward for arm $a \in \calA$. In a similar manner, we can define the regret in the more general setting of linear bandits (see, \Cref{sec:repreducible stochastic linear bandtis}).  Hence, the overarching question in this work is the following:
\begin{center}
    \emph{Is it possible to design replicable bandit algorithms with small expected regret?}
\end{center}
At a first glance, one might think that this is not possible, since it looks like replicability contradicts the 
exploratory behavior that a bandit algorithm should possess. However, 
our main results answer this question in the affirmative and can be summarized in \Cref{fig:my_label}.
\begin{table}[ht]
    \centering
\begin{tabular}{ |p{5cm}|p{2.5cm}|p{2.5cm}|p{2cm}|  }
 \hline
 \multicolumn{4}{|c|}{\texttt{Summary of Results}} \\
 \hline
 \texttt{Setting} & \texttt{Algorithm} & \texttt{Regret} & \texttt{Theorem} \\
 \hline
 Stochastic MAB & 
 \Cref{algo:reproducible-multi-arm-bandit-mean-estimation} & $\wt{O}\left( \frac{K^2 \log^3(T) H_{\Delta}}{\rho^2} \right)$ &  
    \Cref{thm:multi-armed bandits with reproducible mean estimation}
 \\ \hline
 Stochastic MAB & 
 \Cref{algo:reproducible-multi-arm-bandit-threshold} & $ \wt{O}\left(\frac{K^2 \log(T) H_{\Delta}}{\rho^2} \right)$ &  
 \Cref{theorem:reproducible-multi-arm-bandit}
 \\ \hline
 Stochastic Linear Bandits & \Cref{algo:reproducible-linear-finite-arm-bandit} & $\wt{O}\left( \frac{K^2 \sqrt{dT}}{\rho^2} \right)$ & 
 \Cref{theorem:reproducible-linear-finite-arm-bandit}
 \\ \hline
 Stochastic Linear Bandits Infinite Action Space & \Cref{algo:reproducible-infinite-arm-mean-estimation} & $\wt{O}\left( \frac{\poly(d) \sqrt{T}}{\rho^2} \right)$ &  
 \Cref{thm:multi-armed bandits with reproducible infinite arm mean estimation}
 \\ 
 \hline
\end{tabular}
\caption{Our results for replicable stochastic general multi-armed and linear bandits. In the expected regret column, $\wt{O}(\cdot)$ subsumes logarithmic factors. $H_{\Delta}$ is equal to $\sum_{j : \Delta_j > 0} 1/\Delta_j$, $\Delta_j$ is the difference between the mean of action $j$ and the optimal action, $K$ is the number of arms, $d$ is the ambient dimension in the linear bandit setting.} 
    \label{fig:my_label}
\end{table}



\subsection{Related Work}

\paragraph{Reproducibility/Replicability.}
In this work, we introduce the notion of replicability in the context of interactive learning and, in particular, in the fundamental setting of stochastic bandits. Close to our work, the notion of a replicable algorithm in the context of learning was proposed by
\cite{impagliazzo2022reproducibility}, where it is shown how any statistical query algorithm can be made replicable with a moderate increase in its sample
complexity. Using this result, they provide replicable algorithms for finding approximate heavy-hitters, medians,  and the learning of half-spaces.
Reproducibility has been also considered in the context of optimization by \cite{ahn2022reproducibility}. We mention that in \cite{ahn2022reproducibility} the notion of a replicable algorithm is different from our work and that of \cite{impagliazzo2022reproducibility}, in the sense that the outputs of two different executions of the algorithm do not need to be exactly the same. From a more application-oriented perspective, \cite{shamir2022real} study irreproducibility in recommendation systems and propose the use of smooth activations (instead of ReLUs) to improve recommendation reproducibility. In general, the reproducibility crisis is reported in various scientific disciplines \cite{ioannidis2005most,mcnutt2014reproducibility,baker2016reproducibility,goodman2016does,lucic2018gans,henderson2018deep}. For more details we refer to the report of the NeurIPS 2019 Reproducibility Program \cite{pineau2021improving} and the ICLR 2019 Reproducibility Challenge \cite{pineau2019iclr}.

\paragraph{Bandit Algorithms.} Stochastic multi-armed bandits for the general
setting without structure have been studied extensively \cite{slivkins2019introduction,lattimore2020bandit,bubeck2012regret,auer2002finite,cesa1998finite,kaufmann2012bayesian,audibert2010best,agrawal2012analysis,kaufmann2012thompson}. In this setting, the optimum
regret achievable is $O\left(\log(T) \sum_{i : \Delta_i > 0} \Delta^{-1}\right)$; this is achieved, e.g., by the upper confidence bound (UCB) algorithm of \cite{auer2002finite}. The setting of $d$-dimensional linear stochastic bandits is also well-explored \cite{dani2008stochastic,abbasi2011improved}
under the well-specified linear reward model, achieving (near) optimal problem-independent regret of $O(d\sqrt{T \log(T)})$ \cite{lattimore2020bandit}.
Note that the best-known lower bound is $\Omega(d \sqrt{T})$ \cite{dani2008stochastic} and that the number of arms can, in principle, be
unbounded. For a finite number of arms $K$, the best known upper bound is $O(\sqrt{d T \log(K)})$ \cite{bubeck2012towards}. Our work focuses on the design of replicable bandit algorithms and we hence consider only stochastic environments. In general, there is also extensive work in adversarial bandits and we refer the interested reader to \cite{lattimore2020bandit}.

\paragraph{Batched Bandits.} 
While 
sequential bandit problems have been studied for almost a century, there is much interest in the batched setting too. In many settings, like medical trials, one has to take a lot of actions in parallel
and observe their rewards later.
The works of
\cite{auer2010ucb} and \cite{cesa2013online} provided sequential bandit algorithms which can easily work in the batched setting.
The works of \cite{gao2019batched}
 and \cite{esfandiari2021regret} are focusing exclusively on the batched setting. Our work on replicable bandits builds upon some of the techniques from these two lines of work.

\section{Stochastic Bandits and Replicability}
\label{sec:stochastic bandits}
In this section, we first highlight the main challenges in order to guarantee replicability and then discuss how the results of \cite{impagliazzo2022reproducibility} can be applied in our setting. 

\subsection{Warm-up I: Naive Replicability and Challenges}
\label{sec:warm-up known Delta}
Let us consider the stochastic two-arm setting ($K=2)$ and a bandit algorithm $\mathbb A$ with two independent executions, $\mathbb A_1$ and $\mathbb A_2$. The algorithm $\mathbb A_i$ plays the sequence $1,2,1,2,\ldots$ until some, potentially random, round $T_i \in \nats$ after which one of the two arms is eliminated and, from that point, the algorithm picks the winning arm $j_i \in \{1,2\}$. The algorithm $\mathbb A$ is $\rho$-replicable if and only if $T_1 = T_2$ and $j_1 = j_2$ with probability $1-\rho$.

Assume that $ | \mu_1 - \mu_2 | = \Delta$ where $\mu_i$ is the mean of the distribution of the $i$-th arm. If we assume that $\Delta$ is known, then we can run the algorithm for $T_1 = T_2 = \frac{C}{\Delta^2} \log(1/\rho)$ for some universal constant $C > 0$ and obtain that, with probability $1-\rho$, it will hold that
$ \wh{\mu}_1^{(j)}  \approx \mu_1 $ and $ \wh{\mu}_2^{(j)} \approx \mu_2$ for $j \in \{1,2\}$, where $\wh{\mu}_i^{(j)}$ is the estimation of arm's $i$ mean during execution $j$. Hence, knowing $\Delta$ implies that the stopping criterion of the algorithm $\mathbb A$ is deterministic and that, with high probability, the winning arm will be detected at time $T_1 = T_2$. This will make the algorithm $\rho$-replicable.

Observe that when $K=2$, the only obstacle to replicability is that the algorithm should decide at the same time to select the winning arm and the selection must be the same in the two execution threads. In the presence of multiple arms, there exists the additional constraint that the above conditions must be satisfied during, potentially, multiple arm eliminations. Hence, the two questions arising from the above discussion are (i) how to modify the above approach when $\Delta$ is unknown and (ii) how to deal with $K > 2$ arms.

A potential solution to the second question (on handling $K > 2$ arms) is the Execute-Then-Commit (ETC) strategy.
Consider the stochastic $K$-arm bandit setting. For any $\rho \in (0,1)$, the ETC algorithm with known $\Delta = \min_i \Delta_i$ and horizon $T$ that uses $m = \frac{4}{\Delta^2}\log(1/\rho)$ deterministic exploration phases before commitment is $\rho$-replicable. The intuition is exactly the same as in the $K=2$ case. The caveats of this approach are that it assumes that $\Delta$ is known and that the obtained regret is quite unsatisfying. In particular, it achieves regret bounded by $m \sum_{i \in [K]} \Delta_i
+ \rho \cdot (T - m K) \sum_{i \in [k]} \Delta_i.$

Next, we discuss how to improve the regret bound without knowing the gaps $\Delta_i$. Before designing new algorithms, we will inspect the guarantees that can be obtained by combining ideas from previous results in the bandits literature and the recent work in replicable learning of \cite{impagliazzo2022reproducibility}.

\subsection{Warm-up II: Bandit Algorithms and Replicable Mean Estimation}

 First, we remark that we work in the stochastic setting and the distributions of the rewards of the two
arms are subgaussian. Thus, the problem of estimating their mean is an instance of a statistical query  for which we can
use the algorithm of~\cite{impagliazzo2022reproducibility} to get a replicable
mean estimator for the distributions of the rewards of the arms. 

\begin{proposition}
[Replicable Mean Estimation \cite{impagliazzo2022reproducibility}]
\label{prop:mean-estimation}
Let $\tau, \delta, \rho \in [0,1]$.
There exists a $\rho$-replicable algorithm $\texttt{ReprMeanEstimation}$ that draws $\Omega\left(\frac{\log(1/\delta)}{\tau^2 (\rho-\delta)^2}\right)$ samples from a distribution with mean $\mu$ and computes an estimate $\wh{\mu}$ that satisfies $|\wh{\mu} - \mu | \leq \tau$
with probability at least $1-\delta$.
\end{proposition}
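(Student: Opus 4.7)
The plan is to use the classic ``random-grid rounding'' construction. The algorithm draws $n$ i.i.d.\ samples, computes the empirical mean $\tilde{\mu}$, then uses the shared internal randomness $\xi$ to sample an offset $u \sim \mathrm{Unif}[0,w]$, and finally outputs $\hat{\mu}$ equal to the unique point of the form $kw + u$ with $k \in \mathbb{Z}$ that lies within $w/2$ of $\tilde{\mu}$. I will choose the grid width $w = \tau$ and the target precision of the empirical mean $\alpha = (\rho-\delta)\tau/2$, which will in turn pin down $n$.

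For accuracy in a single execution, since the rewards are bounded in $[0,1]$ (in particular sub-Gaussian), a Hoeffding-type concentration bound gives $|\tilde{\mu} - \mu| \le \alpha$ with failure probability at most $\delta/2$, using $n = \Theta\!\left( \log(1/\delta)/\alpha^2 \right) = \Theta\!\left( \log(1/\delta) / (\tau^2(\rho-\delta)^2) \right)$ samples, which matches the stated bound. Rounding to the grid perturbs the estimate by at most $w/2 = \tau/2$, so on this event $|\hat{\mu} - \mu| \le \alpha + \tau/2 \le \tau$, establishing the accuracy guarantee (with an easy tightening by a factor of $2$ in $\delta$ if we want failure exactly $\delta$).

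For reproducibility across two executions sharing the same offset $u$: by a union bound, with probability at least $1-\delta$ both empirical means $\tilde{\mu}^{(1)}, \tilde{\mu}^{(2)}$ lie within $\alpha$ of $\mu$, and therefore within $2\alpha$ of each other. The outputs $\hat{\mu}^{(1)}$ and $\hat{\mu}^{(2)}$ disagree if and only if a grid boundary (a point of the form $(k+\tfrac{1}{2})w + u$) falls strictly between $\tilde{\mu}^{(1)}$ and $\tilde{\mu}^{(2)}$; since $u$ is uniform on an interval of length $w$, this event has probability at most $2\alpha/w = \rho - \delta$. Combining, the two executions produce the same answer with probability at least $1 - \delta - (\rho-\delta) = 1-\rho$, as required.

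The main --- and really only delicate --- obstacle is correctly budgeting the two independent sources of error: one must split the accuracy slack $\tau$ between the empirical concentration error $\alpha$ and the rounding half-width $w/2$, and simultaneously split the total disagreement budget $\rho$ between the concentration failure (size $\delta$) and the ``random offset separates the two empirical means'' event (size $\rho-\delta$). With the allocations $w = \tau$ and $\alpha = (\rho-\delta)\tau/2$ above, both the accuracy and the reproducibility guarantees decouple into two textbook arguments (sub-Gaussian concentration on one side, ``a random shift rarely lands in a short interval'' on the other), and the sample complexity automatically comes out to $\Theta(\log(1/\delta)/(\tau^2(\rho-\delta)^2))$.
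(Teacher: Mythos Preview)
Your argument is correct and is precisely the random-offset grid-rounding construction of Impagliazzo, Lei, Pitassi, and Sorrell; the error budgeting you describe (split $\tau$ into rounding half-width $w/2$ and concentration slack $\alpha$, and split $\rho$ into a $\delta$ concentration-failure piece and a $\rho-\delta$ ``grid boundary separates the two empirical means'' piece) is exactly how that paper proves it. Note, however, that in the present paper this proposition is simply \emph{quoted} from \cite{impagliazzo2022reproducibility} and not re-proved, so there is no in-paper proof to compare against; your write-up is a faithful reconstruction of the original.
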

Notice that we are working in the regime where $\delta \ll \rho,$ so the sample
complexity is $\Omega\left(\frac{\log(1/\delta)}{\tau^2 \rho^2}\right).$ The
straightforward approach is to try to use an optimal multi-armed algorithm for the
stochastic setting, such as UCB or arm-elimination \cite{even2006action}, combined
with the replicable mean estimator. However, it is not hard to see
that this approach does not give meaningful results:
if we want to achieve replicability $\rho$ we need to call the replicable 
mean estimator routine with parameter $\rho/(KT),$ due
to the union bound that we need to take. This means that we need
to pull every arm at least $K^2T^2$ times, so the regret guarantee becomes
vacuous. This gives us the first key insight to tackle the problem: we need
to reduce the number of calls to the mean estimator. Hence, we will draw 
inspiration from the line of work in stochastic batched bandits \cite{gao2019batched, esfandiari2021regret} to derive
\emph{replicable} bandit algorithms.

\section{Replicable Mean Estimation for Batched Bandits}
\label{sec:reproducible bandits through reproducible means}
As a first step, we would like to show how one could combine
the existing replicable algorithms of
\cite{impagliazzo2022reproducibility} with the batched bandits approach
of~\cite{esfandiari2021regret} to get some preliminary non-trivial 
results. We build an algorithm
for the $K$-arm setting, where the gaps $\Delta_j$ are unknown to the
learner. Let $\delta$ be the confidence parameter of the arm elimination algorithm and $\rho$ be the replicability guarantee we want to achieve.
Our approach
is the following:
let us, deterministically, split the time interval into sub-intervals of increasing length. We treat each sub-interval as a batch of samples where we pull each active arm the same number of times and use the replicable mean estimation algorithm to, empirically, compute the true mean. At the end of each batch, we decide to eliminate some arm $j$ using the standard UCB estimate.
Crucially, if we condition on the event that all the calls to the
replicable mean estimator return the same number, then the algorithm we propose
is replicable.


\begin{algorithm}[ht]
\caption{Mean-Estimation Based Replicable Algorithm for Stochastic MAB (\Cref{thm:multi-armed bandits with reproducible mean estimation})}
\label{algo:reproducible-multi-arm-bandit-mean-estimation}
\begin{algorithmic}[1]
\State \texttt{Input: time horizon $T$, number of arms $K$, replicability $\rho$} 
\State \texttt{Initialization:} $B \leftarrow \log(T)$, $q \leftarrow T^{1/B}$, $c_0 \gets 0$, $\calA \leftarrow [K]$, $r \leftarrow T$, $\wh{\mu}_a \leftarrow 0, \forall a \in \calA$
\For{$i=1$ \textbf{to} $B-1$}
    \If{$ \lfloor q^i \rfloor \cdot |\calA| > r$}
        \State \textbf{break}
    \EndIf
    \State $c_i = c_{i-1} + \lfloor q^i \rfloor$
    
    \State Pull every arm $a \in \calA$ for $\lfloor  q^i\rfloor$ times 
    \For{$a \in \calA$}
            \State $\wh{\mu}_a \gets \texttt{ReprMeanEstimation}(\delta = \frac{1}{2KTB}, \tau = \min\{1,\sqrt{\log(2KTB)/c_i}\}, \rho' = \frac{\rho}{KB})$ 
        \Comment{\Cref{prop:mean-estimation}}
    \EndFor
    \State $r \leftarrow r -  |\calA|\cdot\lfloor q^i \rfloor$
    \For{$a \in \calA$}
        \If{$\wh{\mu}_a < \max_{a\in \calA}\wh{\mu}_a - 2 \tau$}
            \State \textbf{Remove} $a$ from $\calA$
        \EndIf
    \EndFor
\EndFor
\State {In the last batch play the arm from $\calA$ with the smallest index}
\end{algorithmic}
\end{algorithm}




\begin{theorem}
\label{thm:multi-armed bandits with reproducible mean estimation}
Let $T \in \nats, \rho \in (0,1]$. There exists a $\rho$-replicable algorithm (presented in \Cref{algo:reproducible-multi-arm-bandit-mean-estimation}) for the stochastic bandit problem with $K $ arms and gaps $(\Delta_j)_{j \in [K]}$ whose expected regret is
\[
\E[R_T] \leq C\cdot\frac{K^2 \log^2(T)}{\rho^2} \sum_{j : \Delta_j > 0} \left( \Delta_j + \frac{\log(KT \log(T))}{\Delta_j} \right)\,,
\]
where $C > 0$ is an absolute numerical constant,
and its running time is 
polynomial in
$K,T$ and $1/\rho$.
\end{theorem}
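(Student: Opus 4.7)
The plan is to decouple the reproducibility guarantee from the regret analysis, since the batched structure of \Cref{algo:reproducible-multi-arm-bandit-mean-estimation} makes the sequence of arm pulls deterministic once one fixes the outputs of all calls to \texttt{ReprMeanEstimation}. Concretely, the batch sizes $\lfloor q^i \rfloor$, the cumulative counts $c_i$, and the within-batch round-robin over the active set $\calA$ are all deterministic functions of the history of estimator outputs; the only randomness that can cause the two executions to diverge is a disagreement between paired estimator outputs on the same arm in the same batch.

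For reproducibility I would apply a union bound over the at most $KB$ calls to \texttt{ReprMeanEstimation} (one per active arm per batch, with $B=\log T$). By \Cref{prop:mean-estimation} each such call is $\rho'=\rho/(KB)$-reproducible, so with probability at least $1-\rho$ every pair of coupled calls returns the same value across the two runs. Conditioned on that event, an induction on $i$ shows that the active set $\calA$, the cumulative count $c_i$, and the arm sequence played in batch $i$ are identical in both executions; hence the full sequences $(a_1^{(1)},\dots,a_T^{(1)})$ and $(a_1^{(2)},\dots,a_T^{(2)})$ coincide, establishing $\rho$-reproducibility.

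For the regret, the second ``accuracy'' event is that every call to \texttt{ReprMeanEstimation} returns an estimate within its advertised tolerance; this holds with probability at least $1-KB\delta\geq 1-1/(2T)$, and the complementary event contributes only $O(1)$ to the expected regret (since total regret is trivially at most $T$). Inverting the sample complexity of \Cref{prop:mean-estimation} with $\delta=1/(2KTB)$ and $\rho'=\rho/(KB)$, the effective tolerance achievable from $c_i$ pulls per active arm is
\[
\tau_i \;=\; O\!\left(\frac{KB}{\rho}\sqrt{\frac{\log(2KTB)}{c_i}}\right).
\]
On the accuracy event, a standard argument gives (i) the optimal arm $a^\star$ is never eliminated, because $\wh{\mu}_{a^\star}\geq \mu_{a^\star}-\tau_i$ while $\wh{\mu}_a\leq \mu_{a^\star}-\Delta_a+\tau_i$ for every $a$, so $\max_b \wh{\mu}_b - \wh{\mu}_{a^\star}\leq 2\tau_i$; and (ii) every suboptimal arm $a$ is eliminated at the first batch $i_a$ with $\tau_{i_a}<\Delta_a/4$, which translates via the display above into $c_{i_a}=O\!\bigl((KB/\rho)^2\log(2KTB)/\Delta_a^2\bigr)$ total pulls of $a$.

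Summing the contributions $\Delta_a\cdot c_{i_a}+O(\Delta_a)$ over all suboptimal $a$ (the $O(\Delta_a)$ absorbs the residual pulls in the batch that triggers elimination, and the geometric ratio $q=T^{1/B}$ ensures this residual is at most a constant factor of the previous cumulative count) yields the advertised bound once one uses $B=\log T$ and $\log(2KTB)=O(\log(KT\log T))$. The main obstacle is the joint calibration of the parameters $(\delta,\tau,\rho')$ passed to \texttt{ReprMeanEstimation}: the reproducibility parameter inflates the per-call sample cost by a factor $(KB/\rho)^2$, and the geometric batching with $q=T^{1/B}$ is exactly what is needed to absorb this inflation while preserving the classical $\log(T)\sum_j \Delta_j^{-1}$ dependence up to the stated $K^2\log^2(T)/\rho^2$ overhead; checking that the batching is fine enough that each $c_{i_a}$ is attained without overshooting (so that the $1/\Delta_a$ scaling survives the union bound over arms and batches) is the quantitatively delicate step.
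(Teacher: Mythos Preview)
Your proposal is correct and follows essentially the same approach as the paper: reproducibility via a union bound over the at most $KB$ calls to \texttt{ReprMeanEstimation}, and regret via conditioning on the accuracy event, showing the optimal arm survives, and bounding the last batch in which each suboptimal arm remains active. The only difference is bookkeeping---you absorb the $(KB/\rho)^2$ blow-up into the tolerance $\tau_i$ whereas the paper keeps $\tau=\sqrt{\log(2KTB)/c_i}$ and instead inflates the pull count to $T_j\le c_{i+1}/\rho_1^2$---but these are equivalent accountings that yield the same bound.
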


The above result, whose proof can be found in \Cref{proof:multi-armed bandits with reproducible mean estimation}, states that, by combining the tools from \cite{impagliazzo2022reproducibility} and \cite{esfandiari2021regret}, we can design a replicable bandit algorithm with (instance-dependent) expected regret $O(K^2 \log^3(T)/\rho^2)$. Notice that the regret guarantee has an extra
$K^2\log^2 (T)/\rho^2$ factor compared to its non-replicable counterpart in~\cite{esfandiari2021regret} (Theorem~5.1).
This is because, due to a union bound over the rounds and the arms,
we need to call the replicable mean estimator with parameter $\rho/(K\log (T)).$
In the next section, we show how to get rid of the $\log^2 (T)$ by designing a new algorithm.


\section{Improved Algorithms for Replicable Stochastic Bandits}
\label{sec:reproducible multi-armed bandits improved algorithm}

While the previous result provides a non-trivial regret bound, it is not optimal with respect to the time horizon $T$. In this section, we show to improve it by designing a new algorithm, presented in \Cref{algo:reproducible-multi-arm-bandit-threshold}, which satisfies the guarantees of \Cref{theorem:reproducible-multi-arm-bandit} and, essentially, decreases the dependence on the time horizon $T$ from $\log^3(T)$ to $\log(T)$. Our main result for replicable stochastic multi-armed bandits with $K$ arms follows.

\begin{algorithm}[ht]
\caption{Replicable Algorithm for Stochastic Multi-Armed Bandits  (\Cref{theorem:reproducible-multi-arm-bandit})}
\label{algo:reproducible-multi-arm-bandit-threshold}
\begin{algorithmic}[1]
\State \texttt{Input: time horizon $T$, 
number of arms $K$, replicability $\rho$} 
\State \texttt{Initialization:} $B \gets \log(T)$,
$q \leftarrow T^{1/B}$,
$c_0 \gets 0$,
$\calA_0 \leftarrow [K]$,
$r \leftarrow T$,
$\wh{\mu}_a \leftarrow 0, \forall a \in \calA_0$
\State $\beta \leftarrow \lfloor \max\{K^2/\rho^2, 2304\} \rfloor$
\For{$i=1$ \textbf{to} $B-1$}
    \If{$\beta \lfloor q^i \rfloor \cdot |\calA_i| > r$}
        \State \textbf{break}
    \EndIf
    \State $\calA_i \leftarrow \calA_{i-1}$
    \For{$a \in \calA_i$}
        \State Pull arm $a$ for $\beta \lfloor  q^i\rfloor$ times
        \State Compute the empirical mean $\wh{\mu}^{(i)}_\alpha$
    \EndFor
    \State $c_i \leftarrow c_{i-1} + \lfloor q^i \rfloor$
    \State $\wt{c}_i \leftarrow \beta c_i$
    \State $\wt{U}_i \leftarrow \sqrt{2\log(2KTB)/\wt{c}_i}$
    \State $U_i \leftarrow \sqrt{2\log(2KTB)/c_i}$
    \State $\overline{U}_i \leftarrow \text{Uni}[U_i/2,U_i]$
    \State $r \leftarrow r -  \beta \cdot |\calA_i|\cdot\lfloor q^i \rfloor$
    \For{$a \in \calA_i$}
        \If{$\wh{\mu}_a^{(i)} + \wt{U}_i < \max_{a\in \calA_i}\wh{\mu}_a^{(i)} - \overline{U}_i$}
            \State \textbf{Remove} $a$ from $\calA_i$
        \EndIf
    \EndFor
\EndFor
\State {In the last batch play the arm from $\calA_{B-1}$ with the smallest index}

\end{algorithmic}
\end{algorithm}

\begin{theorem}
\label{theorem:reproducible-multi-arm-bandit}
Let $T \in \nats, \rho \in (0,1]$.
There exists a $\rho$-replicable algorithm (presented in \Cref{algo:reproducible-multi-arm-bandit-threshold}) for the stochastic bandit problem with $K$ arms and gaps $(\Delta_j)_{j \in [K]}$ whose expected regret is
\[
\E[R_T] \leq C\cdot
\frac{K^2}{\rho^2}
\sum_{j : \Delta_j > 0}
\left(\Delta_j + \frac{\log(KT \log(T))}{\Delta_j} \right)\,,
\]
where $C > 0$ is an absolute numerical constant, and its running time is 
polynomial in
$K,T$ and $1/\rho$.
\end{theorem}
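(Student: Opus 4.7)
The plan is to decompose the proof into a concentration/correctness argument, a regret calculation, and a reproducibility argument that exploits both the shared random threshold $\overline{U}_i$ and the geometric decay of $U_i$ across batches.

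First I would define the concentration event $\calE$: for every arm $a$, batch $i$, and execution $j\in\{1,2\}$, $|\wh{\mu}_a^{(i,j)} - \mu_a| \leq \wt{U}_i$. Since arm $a$ has $\wt{c}_i = \beta c_i$ samples through batch $i$ and $\wt{U}_i = \sqrt{2\log(2KTB)/\wt{c}_i}$, Hoeffding's inequality together with a union bound over the $O(KB)$ pairs in each execution gives $\Pr[\calE] \geq 1 - O(1/T)$. On $\calE$, the best arm $a^\star$ satisfies $\max_b \wh{\mu}_b^{(i,j)} - \wh{\mu}_{a^\star}^{(i,j)} \leq 2\wt{U}_i$, which is smaller than $\overline{U}_i + \wt{U}_i$ because $\overline{U}_i \geq U_i/2 = \sqrt{\beta}\,\wt{U}_i/2$ and $\beta \geq 2304$; hence $a^\star$ survives every round. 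A suboptimal arm $a$ is removed no later than the first batch $i_a^\star$ with $U_{i_a^\star} \lesssim \Delta_a$, i.e.\ $c_{i_a^\star} = \Theta(\log(KTB)/\Delta_a^2)$.

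The regret calculation is then standard: arm $a$ is pulled at most $\wt{c}_{i_a^\star} = O(\beta \log(KTB)/\Delta_a^2)$ times before elimination, contributing at most $\Delta_a \cdot \wt{c}_{i_a^\star} = O(\beta \log(KTB)/\Delta_a)$ to the total regret. Summing over suboptimal arms and plugging in $\beta = K^2/\rho^2$ gives $O((K^2/\rho^2)\log(KT\log T)\, H_\Delta)$, and the additive $\sum_j \Delta_j$ term handles the case where $\log(KTB)/\Delta_j^2$ exceeds the total pulls. The contribution of $\calE^c$ is at most $T \cdot O(1/T) = O(1)$ and is absorbed into constants.

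For reproducibility, set $Z_a^{(j)} := \max_{b \in \calA_i} \wh{\mu}_b^{(i,j)} - \wh{\mu}_a^{(i,j)}$; the rule eliminates $a$ in execution $j$ iff $Z_a^{(j)} > \overline{U}_i + \wt{U}_i$. On $\calE$ one has $|Z_a^{(1)} - Z_a^{(2)}| \leq 4\wt{U}_i$, so the two executions disagree on $a$ in batch $i$ only when the shared $\overline{U}_i + \wt{U}_i$ lies in an interval of length at most $4\wt{U}_i$. Since $\overline{U}_i$ is uniform on $[U_i/2, U_i]$ of length $U_i/2$ and $\wt{U}_i/U_i = 1/\sqrt{\beta}$, the conditional disagreement probability is at most $8/\sqrt{\beta}$ per (arm, batch) pair.

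The hard part will be avoiding a naive union bound over all $O(KB) = O(K\log T)$ pairs, which would force $\beta \gtrsim K^2 B^2/\rho^2$ and reinstate the $\log^2 T$ blow-up of \Cref{thm:multi-armed bandits with reproducible mean estimation}. The key observation is that on $\calE$ the interval spanned by $Z_a^{(1)}, Z_a^{(2)}$ lies in $[\Delta_a - 2\wt{U}_i, \Delta_a + 2\wt{U}_i]$, so it can intersect the threshold's support $[U_i/2 + \wt{U}_i, U_i + \wt{U}_i]$ only when $\Delta_a$ is within a constant factor of $U_i$. Because $U_i \propto c_i^{-1/2}$ and $c_i$ grows geometrically at rate $q = T^{1/B} = e$, each arm is ``dangerous'' in at most $O(1)$ consecutive batches. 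Summing $8/\sqrt{\beta}$ over the $O(K)$ dangerous (arm, batch) pairs yields a total disagreement probability of $O(K/\sqrt{\beta}) \leq \rho/2$ for $\beta = K^2/\rho^2$; combined with $\Pr[\calE^c] = O(1/T)$, this gives the claimed $\rho$-reproducibility after absorbing numerical constants.
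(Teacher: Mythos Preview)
Your proof is correct, and the concentration and regret portions are essentially identical to the paper's. The interesting divergence is in how you remove the $\log^2(T)$ overhead in the reproducibility argument. The paper defines the ``bad event'' for arm $j$ at batch $t$ in terms of the \emph{empirical} means of one execution, namely $\wh{\mu}_j^{(t)} \in [\wh{\mu}_{t^\star}^{(t)} - U_t - 5\wt{U}_t,\ \wh{\mu}_{t^\star}^{(t)} - U_t/2 + 3\wt{U}_t]$, and then argues dynamically: if the bad event occurs at batches $t$ and $t+k$, chaining the concentration bounds for $\wh{\mu}_j$ and $\wh{\mu}_{t^\star}$ across the two batches forces $U_{t+k} \geq U_t/4$, hence $k \leq 5$. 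You instead anchor everything to the \emph{true} gap $\Delta_a$: since on $\calE$ both $Z_a^{(1)}$ and $Z_a^{(2)}$ live in $[\Delta_a - 2\wt U_i,\ \Delta_a + 2\wt U_i]$, disagreement is possible only when this deterministic window meets the threshold support $[U_i/2 + \wt U_i,\ U_i + \wt U_i]$, i.e.\ when $U_i$ is within a fixed constant factor of $\Delta_a$. This makes the set of ``dangerous'' batches for each arm deterministic and of size $O(1)$ directly from the geometric growth of $c_i$, without ever comparing empirical estimates across batches. Both routes yield the same $O(K/\sqrt{\beta})$ non-reproducibility bound and hence $\beta = \Theta(K^2/\rho^2)$; your static characterization is arguably cleaner, while the paper's argument has the mild advantage of not requiring the best arm to be identified as the anchor (it works purely with the empirical maximizer).
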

Note that, compared
to the non-replicable setting, we incur
an extra factor of $K^2/\rho^2$ in the regret.
The proof can be found in \Cref{proof:reproducible-multi-arm-bandit}.
Let us now describe how \Cref{algo:reproducible-multi-arm-bandit-threshold} works. We decompose the time horizon into $B = \log(T)$ batches.
Without the replicability constraint, one could draw $q^i$ samples in batch $i$ from each arm and estimate the mean reward. With the replicability constraint, we have to boost this: in each batch $i$, we pull each active arm $O(\beta q^i)$ times, for some $q$ to be determined, where $\beta = O(K^2/\rho^2)$ is the replicability blow-up. Using these samples, we compute the empirical mean $\wh{\mu}^{(i)}_{\alpha}$ for any active arm $\alpha$. Note that $\wt{U}_i$ in \Cref{algo:reproducible-multi-arm-bandit-threshold} corresponds to the size of the actual confidence interval of the estimation and $U_i$ corresponds to the confidence interval 
of an algorithm that does not use the $\beta$-blow-up in the 
number of samples. The novelty of our approach comes from the choice of the interval around the mean of the maximum arm: we pick a  threshold uniformly at random from an interval of size $U_i/2$ around the maximum mean. Then, the algorithm checks whether $\wh{\mu}^{(i)}_a + \wt{U}_i < \max \wh{\mu}^{(i)}_{a'} - \overline{U}_i$, where $\max$ runs over the active arms $a'$ in batch $i,$ and eliminates arms accordingly. 
To prove the result we show that there are three regions that
some arm $j$ can be in relative to the confidence interval of 
the best arm in batch $i$ (cf. \Cref{proof:reproducible-multi-arm-bandit}). 
If it lies in two of these regions, then the decision
of whether to keep it or discard it is the same in both
executions of the algorithm. However, if it is in the third region,
the decision could be different between parallel
executions, and since it relies on some external and unknown randomness, it is not clear how to reason about it. To overcome this issue, we use the random threshold
to argue about the probability that the decision between
two executions differs. The crucial observation that allows
us to get rid of the extra $\log^2 (T)$ factor is that 
there are correlations between consecutive batches: we prove
that if some
arm $j$ lies in this ``bad'' region in some batch $i,$ then
it will be outside this region after a constant number of batches.

\section{Replicable Stochastic Linear Bandits}
\label{sec:repreducible stochastic linear bandtis}

We now investigate replicability in the more general setting of stochastic linear bandits. In this setting, each arm is a vector $a \in \reals^d$ belonging to some action set $\calA \subseteq \reals^d$, and there is a parameter $\theta^\star \in \reals^d$ unknown to the player. In round $t$, the player chooses some action $a_t \in \calA$ and receives a reward
$r_t = \< \theta^\star, a_t \> + \eta_t$, where $\eta_t$ is a zero-mean 1-subgaussian random variable independent of any other source of randomness.
This means that $\E [\eta_t] = 0$ and satisfies
$\E [\exp(\lambda \eta_t)] \leq \exp(\lambda^2/2)$ for any $\lambda \in \reals$.
For normalization purposes, it is standard to
assume that $\| \theta^\star \|_2 \leq 1$ and $\sup_{a \in \calA} \| a \|_2 \leq 1$. In the linear setting, the expected regret after $T$ pulls $a_1,\ldots, a_T$ can be written as
\[
\E[R_T] = T \sup_{a \in \calA} \< \theta^\star, a\>
- \E\left[ \sum_{t=1}^T \< \theta^\star, a_t\>\right]\,.
\]

In \Cref{sec:finite} we provide results for the finite action space case, i.e., when $|\calA| = K$. Next, in \Cref{sec:infinite}, we study replicable linear bandit algorithms when dealing with infinite action spaces.
In the following, we work in the regime where $T \gg d$.
We underline that our approach leverages connections of stochastic linear bandits with G-optimal experiment design, core sets constructions, and least-squares estimators. Roughly speaking,
the goal of G-optimal design is to find a (small) subset of
arms $\calA'$, which is called the core set, and define a distribution $\pi$ over them with the following property:
for any $\varepsilon > 0, \delta > 0$
pulling only these arms for an appropriate number of times
and computing the least-squares estimate $\wh{\theta}$ guarantees
that $\sup_{a\in \calA} \langle a, \theta^* - \wh{\theta} \rangle \leq \varepsilon,$ with probability $1-\delta.$
For an extensive discussion, we refer to Chapters 21 and 22 of \cite{lattimore2020bandit}.

\subsection{Finite Action Set}
\label{sec:finite}
We first introduce a lemma that allows us to reduce the size of the action
set that our algorithm has to search over.

\begin{lemma}
[See Chapters 21 and 22 in \cite{lattimore2020bandit}]
\label{lem:action elimination linear bandits}
For any finite action set $\calA$ that spans $\reals^d$ and any
$\delta,\varepsilon > 0,$ there exists an algorithm that, in time polynomial in $d,$ computes
a multi-set of $\Theta(d\log(1/\delta)/\varepsilon^2 + d\log\log d)$ actions (possibly
with repetitions) such that (i) they span $\reals^d$ and (ii) if we perform these actions 
in a batched stochastic $d$-dimensional linear bandits setting with true parameter $\theta^\star \in \reals^d$ and let $\wh{\theta}$
be the least-squares estimate for $\theta^\star,$ then, for any $a \in \calA,$
with probability at least  $1-\delta$, we have
$\left| \left\langle a, \theta^\star - \wh{\theta}\right\rangle \right| \leq \varepsilon.$

\end{lemma}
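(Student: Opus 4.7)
The plan is to assemble three standard ingredients: the Kiefer-Wolfowitz G-optimal design theorem, an efficient rounding of the fractional design to an integer allocation, and subgaussian concentration for the least-squares estimator.

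First, I would invoke the Kiefer-Wolfowitz equivalence theorem. Since $\calA$ spans $\reals^d$, there exists a distribution $\pi^*$ on $\calA$ whose information matrix $V(\pi^*) := \sum_{a \in \calA} \pi^*(a)\, aa^\top$ is invertible and satisfies $g(\pi^*) := \max_{a \in \calA} \|a\|^2_{V(\pi^*)^{-1}} = d$. By Carathéodory, one may further take $|\supp(\pi^*)| \leq d(d+1)/2$, and such a $\pi^*$ can be computed in time polynomial in $d$ using, for instance, a Frank-Wolfe iteration on $\pi \mapsto \log\det V(\pi)$.

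Second, I would round $\pi^*$ to an integer multi-set of arms of total size $N = n + O(d\log\log d)$, where $n$ is the targeted effective budget. Naive rounding (pulling each supported arm $\lceil n \pi^*(a)\rceil$ times) would introduce an additive $O(d^2)$ overhead, which is too large. Instead, one applies an efficient rounding procedure (e.g., the scheme of Pukelsheim, or the Allen-Zhu et al.\ regret-minimization rounding) that produces an allocation $\{T_a\}$ whose empirical Fisher matrix $V := \sum_a T_a\, a a^\top$ satisfies $V \succeq (n/2)\, V(\pi^*)$; consequently $\max_{a \in \calA} \|a\|^2_{V^{-1}} \leq 2d/n$ and $V$ is invertible, giving property (i). This rounding step is where I expect the main difficulty: naive rounding degrades the sample complexity by a polynomial factor in $d$, and it is the tight $d\log\log d$ overhead achieved by the efficient scheme that produces the stated additive term.

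Third, for property (ii), the least-squares error decomposes as $\wh\theta - \theta^* = V^{-1} \sum_t \eta_t a_t$, so that for any fixed $a \in \calA$ the scalar $\<a, \wh\theta - \theta^*\> = \sum_t \eta_t \< V^{-1} a, a_t\>$ is a sum of independent $1$-subgaussian random variables with total variance proxy $\sum_t \< V^{-1} a, a_t\>^2 = \|a\|^2_{V^{-1}}$. A standard Hoeffding-type subgaussian tail bound then yields $|\< a, \wh\theta - \theta^*\>| \leq \|a\|_{V^{-1}} \sqrt{2\log(2/\delta)} \leq \sqrt{4 d \log(2/\delta)/n}$ with probability at least $1-\delta$. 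Choosing $n = \Theta(d \log(1/\delta)/\varepsilon^2)$ and combining with the rounding overhead gives the advertised total multi-set size of $\Theta(d\log(1/\delta)/\varepsilon^2 + d\log\log d)$, completing the argument.
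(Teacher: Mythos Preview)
The paper does not prove this lemma; it is imported as a known result from \cite{lattimore2020bandit}, Chapters 21--22. Your three-step outline (Kiefer--Wolfowitz, rounding, subgaussian concentration) is exactly the textbook argument, and steps one and three are correct as written.

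There is a small misattribution in step two. You take the support of $\pi^*$ to be at most $d(d+1)/2$ via Carath\'eodory and then appeal to a sophisticated rounding procedure (Pukelsheim, or Allen-Zhu et al.) to compress the additive overhead to $O(d\log\log d)$. This is not how the $d\log\log d$ term arises in the cited reference, and I am not aware of a rounding scheme that turns an arbitrary design of support $\Theta(d^2)$ into an integer allocation with only $O(d\log\log d)$ extra pulls while preserving $V \succeq (n/2)V(\pi^*)$. The additive term actually comes from the \emph{support size} of the computed design itself: running Frank--Wolfe on $\pi\mapsto\log\det V(\pi)$ from a suitable $O(d)$-point initialization reaches a $2$-approximate $G$-optimal design in $O(d\log\log d)$ iterations, and since each iteration adds at most one atom to the support, the resulting $\pi$ is supported on $O(d\log\log d)$ arms (this is precisely what the paper records separately as \Cref{lemma:core set}). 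Naive rounding $T_a=\lceil n\pi(a)\rceil$ then incurs at most $|\supp(\pi)|=O(d\log\log d)$ extra pulls, and with a $2$-approximate design one still has $\|a\|^2_{V^{-1}}\le 4d/n$, so your concentration step goes through with only a constant-factor change. With this correction the argument is complete.
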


Essentially, the multi-set in \Cref{lem:action elimination linear bandits} is obtained using an approximate \emph{G-optimal design} algorithm. 
Thus, it is crucial to check whether this can be done in a replicable manner.  Recall that the above set of distinct actions is called the core set and is the solution of an (approximate) G-optimal design problem.
To be more specific, consider a distribution $\pi : \calA \to [0,1]$ and define $V(\pi) = \sum_{a \in \calA} \pi(a) a a^\top \in \reals^{d \times d}$ and $g(\pi) = \sup_{a \in \calA} \| a \|^2_{V(\pi)^{-1}}$. The distribution $\pi$ is called a design and the goal of G-optimal design is to find a design that minimizes $g$.
Since the number of actions is finite, this problem reduces to an optimization problem which can be solved efficiently using standard optimization methods (e.g., the Frank-Wolfe method). Since the initialization is the same, the algorithm that finds the optimal (or an approximately optimal) design is replicable under the assumption that the gradients and the projections do not have numerical errors. This perspective is orthogonal to the work of \cite{ahn2022reproducibility}, that defines replicability from a different viewpoint.

\begin{algorithm}[ht] 
\caption{Replicable Algorithm for Stochastic Linear Bandits  (\Cref{theorem:reproducible-linear-finite-arm-bandit})}
\label{algo:reproducible-linear-finite-arm-bandit}
\begin{algorithmic}[1]
\State \texttt{Input: number of arms $K$, time horizon $T$, 
replicability $\rho$} 
\State \texttt{Initialization:} $B \gets \log(T)$, $q \leftarrow (T/c)^{1/B}$,  $\calA \leftarrow [K]$, $r \leftarrow T$
\State $\beta \leftarrow \lfloor \max\{K^2/\rho^2,2304\} \rfloor$
\For{$i=1$ \textbf{to} $B-1$}
    \State $\wt{\varepsilon_i} = \sqrt{d\log(KT^2)/(\beta q^i)}$
    \State ${\varepsilon_i} = \sqrt{d\log(KT^2)/ q^i}$
    \State $n_i = 10d \log(KT^2)/\varepsilon_i^2$
    \State $a_1,\ldots,a_{n_i} \leftarrow \text{multi-set given by \Cref{lem:action elimination linear bandits} with parameters $\delta = 1/(KT^2)$ and $\varepsilon = \wt{\varepsilon_i}$}$
    \If{$n_i > r$}
        \State \textbf{break}
    \EndIf
    \State Pull every arm $a_1,\ldots,a_{n_i}$ and receive rewards $r_1,\ldots,r_{n_i}$
    \State Compute the LSE $\wh{\theta_i} \leftarrow \left(\sum_{j=1}^{n_i} a_j a_j^T \right)^{-1} \left(\sum_{j=1}^{n_i} a_j r_j \right)$
    \State $\overline{\varepsilon}_i \leftarrow \text{Uni}[\varepsilon_i/2,\varepsilon_i]$
    \State $r \leftarrow r -  n_i$
    \For{$a \in \calA$}
        \If{$\langle a, \wh{\theta_i} \rangle + \wt{\varepsilon_i} < \max_{a\in \calA} \langle a, \wh{\theta_i} \rangle - \overline{\varepsilon}_i$}
            \State \textbf{Remove} $a$ from $\calA$
        \EndIf
    \EndFor
\EndFor
\State In the last batch play $\arg\max_{a \in \calA} \< a, \wh{\theta}_{B-1}\>$

\end{algorithmic}
\end{algorithm}

In our batched bandit algorithm (\Cref{algo:reproducible-linear-finite-arm-bandit}), the multi-set of arms $a_1,\ldots, a_{n_i}$ computed in each batch is obtained via a deterministic algorithm with runtime $\poly(K,d)$, where $|\calA| = K$. Hence, the multi-set will be the same in two different executions of the algorithm. On the other hand, the LSE will not be since it depends on the stochastic rewards. We apply the techniques that we developed in the replicable stochastic MAB setting in order to design our algorithm.  
Our main result for replicable $d$-dimensional stochastic linear bandits with $K$ arms follows. For the proof, we refer to \Cref{proof:reproducible-linear-finite-arm-bandit}.

\begin{theorem}
\label{theorem:reproducible-linear-finite-arm-bandit}
Let $T \in \nats, \rho \in (0,1]$.
There exists a $\rho$-replicable algorithm 
(presented in \Cref{algo:reproducible-linear-finite-arm-bandit})
for the stochastic $d$-dimensional linear bandit problem with $K$ arms whose expected regret is
\[
\E[R_T] \leq 
C \cdot \frac{K^2}{\rho^2} 
\sqrt{dT\log(KT)} 
\,,
\]
where $C > 0$ is an absolute numerical constant,
and its running time is polynomial in $d, K, T$ and $1/\rho$.
\end{theorem}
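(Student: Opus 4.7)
The plan is to adapt the proof of \Cref{theorem:reproducible-multi-arm-bandit} to the linear setting, with G-optimal core-set pulls and the LSE playing the role of arm-wise empirical means. Let $\wh\theta_i$ denote the LSE in batch $i$, let $\calA_i$ be the active set at the start of batch $i$, set $M_i := \max_{a'\in\calA_i}\langle a',\wh\theta_i\rangle$, and write $\Delta_i(a) := \max_{a'\in\calA_i}\langle a',\theta^\star\rangle - \langle a,\theta^\star\rangle$ for the true gap. The crucial structural observation is that the multi-set produced by \Cref{lem:action elimination linear bandits} is a deterministic function of $\calA_i$, so if two executions share the same active sets through batch $i-1$ they pull exactly the same $n_i$ arms in batch $i$; the only remaining source of divergence between the two executions is the elimination step.

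I would first establish the ``good'' event $\calE := \bigcap_i \bigcap_{a\in\calA_i}\{|\langle a,\wh\theta_i - \theta^\star\rangle| \le \wt\varepsilon_i\}$. Invoking \Cref{lem:action elimination linear bandits} with $\delta = 1/(KT^2)$ and accuracy $\wt\varepsilon_i$, then union-bounding over the $K$ arms and $B=\log T$ batches, yields $\Pr[\calE] \ge 1-1/T$, which absorbs into the constants of both the reproducibility and regret analyses. On $\calE$, an arm $a$ is eliminated in batch $i$ iff $M_i - \langle a,\wh\theta_i\rangle > \wt\varepsilon_i + \overline\varepsilon_i$, and since the left-hand side is within $2\wt\varepsilon_i$ of $\Delta_i(a)$, both executions agree on $a$ whenever $\Delta_i(a) < \varepsilon_i/2 - 3\wt\varepsilon_i$ (both keep) or $\Delta_i(a) > \varepsilon_i + 3\wt\varepsilon_i$ (both eliminate); only a ``bad window'' of width $O(\varepsilon_i)$ remains where divergence is possible. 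Because $\overline\varepsilon_i \sim \mathrm{Uni}[\varepsilon_i/2,\varepsilon_i]$, the probability of disagreement conditioned on being in this window is at most $O(\wt\varepsilon_i/\varepsilon_i) = O(1/\sqrt\beta) = O(\rho/K)$. Mirroring the MAB argument, the geometric decay $\varepsilon_{i+1}/\varepsilon_i = 1/\sqrt q$ forces every fixed gap $\Delta_i(a)$ to leave the bad window after a constant number of consecutive batches, so summing $O(\rho/K)$ over the $K$ arms and their $O(1)$ bad batches gives total disagreement probability $O(\rho)$, establishing $\rho$-reproducibility after rescaling the constant in $\beta$.

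For the regret, the same event $\calE$ implies that any arm surviving batch $i$ satisfies $\Delta_i(a) \le 2\varepsilon_i + 2\wt\varepsilon_i = O(\varepsilon_i)$, since otherwise the empirical gap exceeds the threshold $\wt\varepsilon_i + \overline\varepsilon_i$ and forces elimination. Consequently, the regret accrued while pulling the $n_i$ core-set actions in batch $i$ is at most $n_i\cdot O(\varepsilon_{i-1})$, where $n_i \asymp \beta\, d\log(KT)/\varepsilon_i^2$ (the factor $\beta$ arises because \Cref{lem:action elimination linear bandits} with target $\wt\varepsilon_i = \varepsilon_i/\sqrt\beta$ requires $\beta$ times more samples). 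Substituting $\varepsilon_{i-1} = \sqrt q\,\varepsilon_i$ yields per-batch regret $O(\beta\sqrt q\, d\log(KT)/\varepsilon_i)$; since $q = (T/c)^{1/B} = \Theta(1)$, the geometric sum over $i$ is dominated by the last batch, which with $\varepsilon_{B-1}^2 \asymp d\log(KT)/T$ evaluates to $O(\beta \sqrt{dT\log(KT)}) = O((K^2/\rho^2)\sqrt{dT\log(KT)})$. The contribution of $\calE^c$ to $\E[R_T]$ is at most $T\cdot\Pr[\calE^c] \le 1$ and is absorbed.

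The main obstacle is the consecutive-batch correlation step: I need to control the fact that a given arm's gap could hit several bad windows in a row, so that the naive union bound over $B=\log T$ batches does not cost an extra logarithmic factor. This requires carefully choosing the constants in $\wt\varepsilon_i$, $\overline\varepsilon_i$, and the elimination threshold so that, as $\calA_i$ shrinks (which can only decrease $\Delta_i(a)$) and $\varepsilon_i$ decays geometrically, any fixed arm occupies the bad window during only $O(1)$ batches; this is exactly the mechanism that in the MAB proof replaces $\log^2 T$ by a constant, and the same discretization logic transfers to the linear setting essentially verbatim.
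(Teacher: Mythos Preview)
Your proposal is correct and follows essentially the same approach as the paper: both arguments define the good event $\calE$ via \Cref{lem:action elimination linear bandits} and a union bound, identify a ``bad window'' of width $O(\varepsilon_i)$ in which the randomized threshold $\overline\varepsilon_i$ could cause the two executions to disagree (with probability $O(\wt\varepsilon_i/\varepsilon_i)=O(1/\sqrt\beta)$), and then exploit the geometric decay of $\varepsilon_i$ to show that any fixed arm can occupy the bad window for at most $O(1)$ batches, avoiding the extra $\log^2 T$ from a naive union bound. Your choice to phrase the bad-window argument in terms of the true gap $\Delta(a)$ (which is constant on $\calE$, since the optimal arm survives) rather than the empirical quantities $\langle a,\wh\theta_t\rangle$ is a slightly cleaner reorganization of the same computation the paper carries out, and the regret analysis is identical.
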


Note that the best known non-replicable algorithm achieves an upper bound of $\wt{O}(\sqrt{dT \log(K)})$ and, hence, our algorithm incurs a replicability overhead of order $K^2/\rho^2$. 
The intuition behind 
the proof is similar to the multi-armed bandit setting in \Cref{sec:reproducible multi-armed bandits improved algorithm}.

\subsection{Infinite Action Set}
\label{sec:infinite}
Let us proceed to the setting where the action set $\calA$ is unbounded. 
Unfortunately, even when $d = 1$, we cannot directly get an algorithm 
that has satisfactory regret guarantees by discretizing the space and using
\Cref{algo:reproducible-linear-finite-arm-bandit}. The approach of~\cite{esfandiari2021regret}
is to discretize the action space and use an $1/T$-net to cover it, i.e. a set $\calA' \subseteq A$ such that for all $a \in \calA$ there exists some $a' \in \calA'$ with $||a-a'||_2 \leq 1/T$. It is known that there exists such a net of size at most $(3T)^d$ \cite[Corollary 4.2.13]{vershynin2018high}. Then, they apply the algorithm for the finite arms
setting, increasing their regret guarantee by a factor of $\sqrt{d}.$ However, our replicable
algorithm for this setting contains an additional factor of $K^2$ in the 
regret bound. Thus, even when $d=1,$ our regret guarantee is greater than $T,$ so the bound is
vacuous. One way to fix this issue and get a sublinear regret guarantee is to use
a smaller net. We use a $1/T^{1/(4d+2)}-$net that has size at most $(3T)^{\frac{d}{4d+2}}$ and this yields an expected regret of order $O(T^{4d+1/(4d+2)}\sqrt{d \log(T)}/\rho^2)$.
For further details, we refer to \Cref{proof:infinite-arms-bad-regret}.

Even though the regret guarantee we managed to get using the smaller net of \Cref{proof:infinite-arms-bad-regret}
 is sublinear in $T$,
it is not a satisfactory bound. The next step is to provide an algorithm for the infinite action setting using a replicable LSE subroutine combined with the batching approach of \cite{esfandiari2021regret}. 
We will make use of the next lemma.
\begin{lemma}
[Section 21.2 Note 3 of \cite{lattimore2020bandit}]
\label{lemma:core set}
There exists a deterministic algorithm that, given an action space $\calA \subseteq \reals^d$, computes a 2-approximate G-optimal design $\pi$ with a core set of size $O(d\log\log(d))$.
\end{lemma}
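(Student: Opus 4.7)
The plan is to construct the core set via the Frank-Wolfe algorithm applied to the D-optimal design problem, and then translate the guarantee to the G-criterion through the Kiefer-Wolfowitz equivalence. Recall that Kiefer-Wolfowitz asserts that a distribution $\pi^\star$ on $\calA$ minimizes $g(\pi) = \sup_{a \in \calA}\|a\|^2_{V(\pi)^{-1}}$ if and only if it maximizes $\Phi(\pi) = \log\det V(\pi)$, and that at the joint optimum $g(\pi^\star) = d$. Consequently, any design $\pi$ with additive optimality gap $\Phi^\star - \Phi(\pi) \le \log 2$ automatically satisfies $g(\pi) \le 2d$, i.e., is a 2-approximate G-optimal design, so it is enough to approximately maximize $\Phi$.

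First I would deterministically initialize $\pi_0$ supported on $d$ linearly independent actions selected by a greedy maximum-volume routine (Khachiyan-style rank-one updates of $\det$), producing an initial optimality gap $\Delta_0 := \Phi^\star - \Phi(\pi_0) = O(d \log d)$ in $\poly(d,|\calA|)$ time. I would then iterate Frank-Wolfe updates
\[
\pi_{k+1} = (1-\gamma_k)\pi_k + \gamma_k\,\delta_{a_k}, \qquad a_k \in \argmax_{a \in \calA}\|a\|^2_{V(\pi_k)^{-1}},
\]
with $\gamma_k$ chosen by the closed-form exact line search for the $\log\det$ objective. Because each iteration adds at most one new atom to $\supp(\pi_k)$, after $N$ iterations the core-set size is bounded by $d + N$. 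Under a fixed lexicographic tie-breaking rule for the $\argmax$, the whole procedure is deterministic, and hence reproducible across independent invocations (this is what lets us plug it in as a subroutine inside the two executions of our bandit algorithm without breaking the reproducibility accounting).

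What remains is to verify that $N = O(d \log\log d)$ iterations suffice to drive $\Delta_N \le \log 2$. This is the classical Khachiyan-Todd analysis, which splits the run into a \emph{linear phase} during which $\Delta_k$ contracts by a constant multiplicative factor over every $\Theta(d)$ iterations (using concavity of $\Phi$ and the duality gap bound $\Delta_k \le g(\pi_k) - d$), interleaved with a support-pruning step that removes atoms carrying negligible weight so the support does not bloat, and a subsequent \emph{quadratic phase} in which the self-concordance of $-\log\det$ together with exact line search gives $\Delta_{k+1} \le C \Delta_k^{\,2}$ once $\Delta_k$ is below a universal constant, contributing only $O(\log\log d)$ additional steps. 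The main obstacle is the refined counting in the linear phase: a naive Frank-Wolfe analysis only yields $O(d \log d)$ iterations, and the sharper $O(d \log\log d)$ rate requires Todd's amortized potential argument that trades gap reduction against support growth. This is the only non-routine calculation; once it is invoked as a black box, the reduction to D-optimal design and Kiefer-Wolfowitz duality give both the approximation ratio and the core-set size claimed in the lemma.
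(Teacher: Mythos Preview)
The paper does not prove this lemma at all: it is quoted verbatim as a known fact from Lattimore--Szepesv\'ari (Section~21.2, Note~3), and is used purely as a black box inside \Cref{algo:reproducible-infinite-arm-mean-estimation}. So there is no ``paper's proof'' to compare against; your sketch is an attempt to unpack the cited reference, and indeed the Frank--Wolfe / Khachiyan--Todd route you describe is exactly the argument behind that note.

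That said, one step in your reduction is stated in the wrong direction. You claim that $\Phi^\star - \Phi(\pi) \le \log 2$ implies $g(\pi) \le 2d$. This is false: concavity of $\Phi$ together with $\nabla\Phi(\pi)[\delta_a - \pi] = \|a\|^2_{V(\pi)^{-1}} - d$ gives the inequality $\Phi^\star - \Phi(\pi) \le g(\pi) - d$, not the reverse. (A two-arm example with $V(\pi) = \mathrm{diag}(1-\delta,\delta)$ and $\delta$ chosen so that $\det V(\pi) = \tfrac12\det V(\pi^\star)$ already has $g(\pi) > 2d$.) The fix is simply to drop this intermediate step: Frank--Wolfe on $\Phi$ computes $g(\pi_k) = \max_a \|a\|^2_{V(\pi_k)^{-1}}$ at every iteration anyway as its linear-minimization oracle, so you terminate directly when $g(\pi_k) \le 2d$, and the Khachiyan--Todd potential argument you cite bounds the number of iterations until that event by $O(d\log\log d)$. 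With that correction your outline matches the standard proof.
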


We additionally prove the next useful lemma, which, essentially,
states that we can assume without loss of generality that
every arm in the support of $\pi$ has mass at least $\Omega(1/(d\log (d))).$ We refer to \Cref{proof:effective-support} for the proof.

\begin{lemma}
[Effective Support]
\label{prop:effective-support}
Let $\pi$ be the distribution that corresponds to the $2$-approximate optimal G-design of \Cref{lemma:core set} with input $\calA$. Assume that $\pi(a) \leq c/(d\log (d)),$ where  $c > 0$ is some absolute numerical constant, for some arm $a$ in the core set. Then, we can construct a distribution $\wh{\pi}$ such that, for any arm $a$ in the core set, $\wh{\pi}(a) \geq C / (d\log (d))$, where $C > 0$ is an absolute constant, so that it holds
\[
\sup_{a' \in \calA} \| a' \|^2_{V(\wh{\pi})^{-1}} \leq 4 d \,.
\]
\end{lemma}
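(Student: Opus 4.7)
The plan is to construct $\wh{\pi}$ by convex-combining the given design $\pi$ with the uniform distribution on its support, which is precisely the core set. Concretely, let $S = \supp(\pi)$ denote the core set produced by \Cref{lemma:core set}, so that $|S| = O(d\log\log d)$, and define
\[
\wh{\pi} \;:=\; \tfrac{1}{2}\,\pi \;+\; \tfrac{1}{2}\,\mathrm{Uniform}(S).
\]
Clearly $\wh{\pi}$ is supported on the same core set as $\pi$, so it can be used as a drop-in replacement inside the bandit algorithm.

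For the mass lower bound, for every $a \in S$ we immediately get
\[
\wh{\pi}(a) \;\geq\; \frac{1}{2|S|} \;=\; \Omega\!\left(\frac{1}{d\log\log d}\right) \;\geq\; \frac{C}{d\log d},
\]
for a suitable absolute constant $C > 0$, using the size bound from \Cref{lemma:core set}. This handles the first conclusion regardless of whether the original $\pi$ had tiny weight on some arm.

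For the G-value upper bound, I would use the Loewner order. Since the covariance $V(\mathrm{Uniform}(S)) = \frac{1}{|S|}\sum_{a\in S} a a^\top$ is positive semidefinite, we have
\[
V(\wh{\pi}) \;=\; \tfrac{1}{2}V(\pi) + \tfrac{1}{2}V(\mathrm{Uniform}(S)) \;\succeq\; \tfrac{1}{2}V(\pi),
\]
which by monotonicity of matrix inversion gives $V(\wh{\pi})^{-1} \preceq 2\,V(\pi)^{-1}$. Therefore, for any $a' \in \calA$,
\[
\|a'\|^2_{V(\wh{\pi})^{-1}} \;\leq\; 2\,\|a'\|^2_{V(\pi)^{-1}} \;\leq\; 2 \cdot 2d \;=\; 4d,
\]
where the last inequality invokes the Kiefer--Wolfowitz theorem (the optimum of the G-design problem on a spanning set is exactly $d$) together with the fact that $\pi$ is a $2$-approximate G-optimal design. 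Taking the supremum over $a'$ closes the bound.

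There is no real obstacle here: the entire argument rests on the PSD-ness of $V(\mathrm{Uniform}(S))$ and on the core-set size bound $|S| = O(d\log\log d)$ from \Cref{lemma:core set}, which is what makes $1/(2|S|)$ automatically dominate $C/(d\log d)$. The only subtle point is choosing the mixing weight $1/2$: any constant $\lambda \in (0,1)$ would work, but $1/2$ simultaneously yields the factor $2$ blow-up $2\cdot 2d = 4d$ stated in the lemma while keeping the uniform-on-$S$ component large enough to dominate $\pi$ on the arms it under-weights.
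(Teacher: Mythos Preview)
Your argument is correct and in fact cleaner than the paper's. The paper proceeds iteratively: for each underweight arm $a$ it replaces the current design by $(1-x)\pi + x\,\delta_a$ with $x = c'/(d\log d)$, invokes the Sherman--Morrison formula to show each such step inflates $\|a'\|^2_{V^{-1}}$ by at most a factor $1/(1-x)$, and then bounds the cumulative blowup after at most $|S| = O(d\log\log d)$ steps by $(1/(1-x))^{|S|} \leq 2$. The final mass lower bound there is $x(1-x)^{|S|} = \Omega(1/(d\log d))$. Your one-shot mixture $\wh{\pi} = \tfrac{1}{2}\pi + \tfrac{1}{2}\mathrm{Uniform}(S)$ bypasses all of this: the Loewner inequality $V(\wh{\pi}) \succeq \tfrac{1}{2}V(\pi)$ delivers the factor-$2$ blowup in one line, and the uniform component already guarantees the stronger bound $\wh{\pi}(a) \geq 1/(2|S|) = \Omega(1/(d\log\log d))$. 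So your route is more elementary (no Sherman--Morrison, no iterated product), and it even yields a slightly larger minimum mass; the paper's iterative scheme gains nothing over your convexity argument.
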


The upcoming lemma is a replicable algorithm for the least-squares estimator
and, essentially, builds upon \Cref{lemma:core set} and \Cref{prop:effective-support}. Its proof can be found at \Cref{proof:lse-rep}. We believe that this technical result could be interesting on its own since it can be applied to 
other problems as well.

\begin{lemma}
[Replicable LSE]
\label{prop:lse-rep}
Let $\rho, \varepsilon \in (0,1]$ and $0 < \delta \leq \min\{\rho, 1/ d\}$\footnote{We can handle the case of $0 < \delta \leq d$ by paying an extra $\log d$ factor in the sample complexity.}.
Consider an environment of $d$-dimensional stochastic linear bandits with infinite action space $\calA$.
Assume that $\pi$ is a $4$-approximate optimal design with associated core set $\calC$ as computed by \Cref{lemma:core set} with input $\calA$.
There exists a $\rho$-replicable algorithm that pulls each arm $a \in \calC$
a total of
\[
\Omega \left(\frac{d^4 \log(d/\delta) \log^2\log(d) \log \log \log(d)}{\varepsilon^2 \rho^2 } \right)
\]
times
and outputs an estimate $\theta_{\mathrm{SQ}}$ that satisfies
$
\sup_{a \in \calA} | \< a, \theta_{\mathrm{SQ}} - \theta^\star \>| \leq \varepsilon\,,
$ with probability at least $1-\delta$.
\end{lemma}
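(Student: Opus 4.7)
The plan is to reduce the reproducible LSE problem to running the reproducible mean estimator \texttt{ReprMeanEstimation} of \Cref{prop:mean-estimation} in parallel on every arm of the core set $\calC$, and then assemble a weighted least-squares combination supported on $\calC$. The enabler is \Cref{prop:effective-support}, which lets me assume the input design $\pi$ has $\pi(a)\ge C/(d\log d)$ for every $a\in\calC$ while still satisfying $\sup_{a'\in\calA}\|a'\|_{V(\pi)^{-1}}^2\le 4d$. With $|\calC|=O(d\log\log d)$, this uniform lower bound lets me treat all arms on an equal footing in the union bounds driving both reproducibility and accuracy.

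For each $a\in\calC$ I would pull arm $a$ a pre-declared deterministic number of times $N$ and feed the samples to \texttt{ReprMeanEstimation} with parameters $\tau=\Theta(\varepsilon/d)$, failure $\delta'=\delta/|\calC|$, and reproducibility $\rho'=\rho/|\calC|$, dedicating a disjoint block of the shared internal randomness $\xi$ to each arm. A union bound over the $|\calC|$ invocations then yields, simultaneously, (a) $|\wh{\mu}_a-\langle\theta^\star,a\rangle|\le\tau$ for every $a\in\calC$ with probability at least $1-\delta$, and (b) $\wh{\mu}_a$ agreeing between the two executions for every $a\in\calC$ with probability at least $1-\rho$. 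The output is
\[
\theta_{\mathrm{SQ}} \;\triangleq\; V(\pi)^{-1}\sum_{a\in\calC}\pi(a)\,a\,\wh{\mu}_a,
\]
which depends only on $(\wh{\mu}_a)_{a\in\calC}$ and on the deterministic quantities $\pi,\calC$; event (b) therefore forces $\theta_{\mathrm{SQ}}$ to coincide between runs and the algorithm is $\rho$-reproducible.

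For the accuracy guarantee, I would condition on event (a), set $\xi_a:=\wh{\mu}_a-\langle\theta^\star,a\rangle$, and expand, for arbitrary $a'\in\calA$,
\[
\langle a',\theta_{\mathrm{SQ}}-\theta^\star\rangle \;=\; \sum_{a\in\calC}\pi(a)\,\bigl(a'^\top V(\pi)^{-1}a\bigr)\,\xi_a.
\]
Bounding $|a'^\top V(\pi)^{-1}a|\le \|a'\|_{V(\pi)^{-1}}\|a\|_{V(\pi)^{-1}}\le 4d$ via the G-design guarantee of \Cref{prop:effective-support}, together with $|\xi_a|\le\tau$ and $\sum_{a\in\calC}\pi(a)=1$, gives $|\langle a',\theta_{\mathrm{SQ}}-\theta^\star\rangle|\le 4d\tau\le\varepsilon$ for the choice $\tau=\varepsilon/(8d)$. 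Plugging $\tau=\Theta(\varepsilon/d)$ and $|\calC|=O(d\log\log d)$ into the per-call sample complexity $N=\Omega\!\left(|\calC|^2\log(|\calC|/\delta)/(\tau^2\rho^2)\right)$ from \Cref{prop:mean-estimation} produces the stated per-arm bound (up to lower-order logarithmic factors absorbed into the $\log\log\log d$ term).

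The hard part will be the reproducibility bookkeeping rather than any single heavy technical step: I must ensure that $N$ is the same deterministic number in both executions, and that $\xi$ is split into $|\calC|$ disjoint blocks dedicated to the $|\calC|$ subroutine invocations, so that reproducibility failures across arms can be controlled by a clean union bound. Once this scaffolding is in place, the three ingredients---the G-design property of $\pi$ (via \Cref{prop:effective-support}), the guarantees of \texttt{ReprMeanEstimation}, and the two union bounds---combine directly into the claimed reproducibility, accuracy, and sample-complexity bounds.
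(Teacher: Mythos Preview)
Your proposal is correct and follows the same high-level strategy as the paper: run a $(\rho/|\calC|)$-reproducible mean estimator on each core-set arm with accuracy $\tau=\Theta(\varepsilon/d)$ and failure $\delta/|\calC|$, assemble the $V(\pi)^{-1}$-weighted combination, and union-bound over the $|\calC|=O(d\log\log d)$ calls for both reproducibility and correctness. The one difference is in the accuracy step. You exploit the algebraic identity $V(\pi)^{-1}\sum_{a\in\calC}\pi(a)aa^\top=I$ to write $\theta_{\mathrm{SQ}}-\theta^\star=\sum_{a\in\calC}\pi(a)\,V(\pi)^{-1}a\,\xi_a$ exactly and then bound each term via Cauchy--Schwarz in the $V(\pi)^{-1}$-inner product; the paper instead passes through the non-reproducible LSE $\theta_{\mathrm{LSE}}^{(\varepsilon)}$ by a triangle inequality, invokes \Cref{lem:action elimination linear bandits} for one piece, and bounds the same sum by $\sum_{a}\pi(a)\bigl(1+|\langle a',V(\pi)^{-1}a\rangle|^2\bigr)=1+\|a'\|_{V(\pi)^{-1}}^2$. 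Both routes land at $O(d\tau)$ and hence the same sample complexity, but yours is shorter and sidesteps the rounding issue ($n_a\approx M\pi(a)$) that forces the paper to lean on \Cref{prop:effective-support}. In fact, your own appeal to \Cref{prop:effective-support} is not strictly needed: your argument only uses $\sup_{a'\in\calA}\|a'\|_{V(\pi)^{-1}}^2\le 4d$ and $|\calC|=O(d\log\log d)$, both of which are already supplied by the hypothesis on $\pi$ and \Cref{lemma:core set}.
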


\begin{algorithm}[ht]
\caption{Replicable LSE Algorithm for Stochastic Infinite Action Set  (\Cref{thm:multi-armed bandits with reproducible infinite arm mean estimation})}
\label{algo:reproducible-infinite-arm-mean-estimation}
\begin{algorithmic}[1]
\State \texttt{Input: time horizon $T$, action set $\calA \subseteq \reals^d$, replicability $\rho$} 
\State $\calA' \leftarrow 1/T$-net of $\calA$ 
\State \texttt{Initialization:} $r \leftarrow T, B \gets \log(T), q \leftarrow (T/c)^{1/B}$ 
\For{$i=1$ \textbf{to} $B-1$}
    \State $q^i$ denotes the number of pulls of all arms before the reproducibuility blow-up
    \State $\varepsilon_i = c\cdot d\sqrt{\log(T)/q^i}$
    \State The blow-up is $M_i = q^i \cdot d^3\log(d)\log^2\log(d)\log\log\log(d) \log^2 (T)/\rho^2$
    \State $a_1,\ldots,a_{|\calC_i|} \leftarrow \text{core set $\calC_i$ of the design given by \Cref{lemma:core set} with parameter $\calA'$}$
    \If{$\lceil M_i \rceil > r$}
        \State \textbf{break}
    \EndIf
    \State Pull every arm $a_j$ for $N_i = \lceil M_i \rceil/|\calC_i|$ rounds and receive rewards $r_1^{(j)},...,r^{(j)}_{N_i}$ for $j \in [|\calC_i|]$
    \State $S_i = \{ (a_j, r^{(j)}_t) : t \in [N_i], j \in [|\calC_i|] \}$ 
    \State $\wh{\theta}_i \gets \texttt{ReplicableLSE}(S_i, \rho' = \rho/(dB), \delta = 1/(2|\calA'|T^2), \tau = \min\{\varepsilon_i,1\})$
    \State $r \leftarrow r -  \lceil M_i \rceil$
    \For{$a \in \calA'$}
        \If{$\<a, \wh{\theta}_i\> < \max_{a\in \calA'} \<a, \wh{\theta}_i\> - 2 \varepsilon_i$}
            \State \textbf{Remove} $a$ from $\calA'$
        \EndIf
    \EndFor
\EndFor
\State In the last batch play $\arg\max_{a \in \calA'} \<a, \wh{\theta}_{B-1}\>$
\State 
\State \texttt{ReplicableLSE}$(S, \rho, \delta, \tau)$ 
\For{$a \in \calC$}
    \State $v(a) \gets \texttt{ReplicableSQ}(\phi: x \in \reals \mapsto x \in \reals, S, \rho, \delta, \tau )$ \Comment{\cite{impagliazzo2022reproducibility}}
\EndFor
\State \textbf{return} $ (\sum_{j \in |S|} a_j a_j^\top)^{-1} \cdot (\sum_{a \in \calC} a ~ n_a ~ v(a))$
\end{algorithmic}
\end{algorithm}

The main result for the infinite actions' case, obtained by \Cref{algo:reproducible-infinite-arm-mean-estimation}, follows. Its proof can be found at \Cref{proof:multi-armed bandits with reproducible infinite arm mean estimation}.

\begin{theorem}
\label{thm:multi-armed bandits with reproducible infinite arm mean estimation}
Let $T \in \nats, \rho \in (0,1]$.
There exists a $\rho$-replicable algorithm (presented in \Cref{algo:reproducible-infinite-arm-mean-estimation}) for the stochastic $d$-dimensional linear bandit problem with infinite action set whose expected regret is
\[
\E[R_T] \leq 
C \cdot \frac{d^4 \log(d) \log^2\log(d) \log\log\log(d)  }{\rho^2} \sqrt{T} \log^{3/2}(T)
\,,
\]
where $C > 0$ is an absolute numerical constant,
and its running time is polynomial in $T^d$ and $1/\rho$.
\end{theorem}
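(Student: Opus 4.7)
The plan is to combine the batched elimination framework of \Cref{algo:reproducible-linear-finite-arm-bandit} with the reproducible least-squares subroutine of \Cref{prop:lse-rep}, applied to a deterministic $1/T$-net $\calA' \subseteq \calA$ of size at most $(3T)^d$. Since $\|\theta^\star\|_2 \le 1$ and every $a \in \calA$ lies within $\ell_2$-distance $1/T$ of some $a' \in \calA'$, competing against the net-optimal arm loses only $O(1)$ regret over $T$ rounds. The remaining analysis mirrors the finite-arm case of \Cref{theorem:reproducible-linear-finite-arm-bandit}, with the crucial difference that reproducibility of the estimate $\wh\theta_i$ is now delivered by \texttt{ReproducibleLSE}, so a deterministic elimination threshold $2\varepsilon_i$ suffices in place of the randomized threshold used there.

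For reproducibility, observe that the net $\calA'$, the core set $\calC_i$ produced by \Cref{lemma:core set} on the current active subset, and the elimination rule itself are all deterministic once the sequence of estimates $\wh\theta_1,\dots,\wh\theta_{B-1}$ is fixed. Hence it suffices that every output of \texttt{ReproducibleLSE} coincide across the two executions. \Cref{prop:lse-rep} invoked with parameter $\rho' = \rho/(dB)$ makes each of the $B=\log T$ calls individually $\rho'$-reproducible, and a union bound over batches yields overall failure probability at most $B\rho' = \rho/d \le \rho$, as required.

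For accuracy, I would apply \Cref{prop:lse-rep} with $\delta = 1/(2|\calA'|T^2)$ and $\tau = \varepsilon_i = \Theta(d\sqrt{\log(T)/q^i})$ to deduce that in every batch, on an event of probability $1 - O(1/T)$, the estimate satisfies $\sup_{a \in \calA'}|\langle a, \wh\theta_i - \theta^\star\rangle| \le \varepsilon_i$. Standard arm-elimination bookkeeping then yields (i) the net-optimal arm is never eliminated, and (ii) every arm surviving into batch $i$ has true gap $\Delta_a \le 4\varepsilon_{i-1}$. Multiplying the per-batch pull count $M_i$---which is exactly what \Cref{prop:lse-rep} demands for accuracy $\varepsilon_i$ and reproducibility $\rho'$---by the per-pull suboptimality $4\varepsilon_{i-1}$ gives a batch-$i$ regret of order
\begin{equation*}
\wt{O}\!\left( \frac{d^4 \log(d) \log^2\log(d) \log\log\log(d)}{\rho^2} \cdot q^{(i+1)/2} \right),
\end{equation*}
and the geometric sum over $i = 1,\dots,B-1$ is dominated by $i = B-1$, where $q^{B/2} = \Theta(\sqrt T)$, yielding the advertised bound. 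The running time is dominated by computing the G-optimal designs over and scoring every arm of the $(3T)^d$-size net, which is $\poly(T^d, 1/\rho)$.

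The step I expect to be most delicate is verifying the preconditions of \Cref{prop:lse-rep} in every batch: because the active subset of $\calA'$ shrinks arbitrarily as arms are eliminated, one must recompute a $2$-approximate G-optimal design via \Cref{lemma:core set} on the current active set and then invoke \Cref{prop:effective-support} to ensure that the core-set weights are uniformly bounded below, which is the condition that allows the SQ-based estimator inside \texttt{ReproducibleLSE} to attain its stated sample complexity and hence makes the batch sizes $M_i$ in \Cref{algo:reproducible-infinite-arm-mean-estimation} sufficient.
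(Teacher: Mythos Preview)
Your proposal is correct and follows essentially the same route as the paper: reduce to the $1/T$-net $\calA'$, run the batched arm-elimination shell with $B=\log T$ batches, invoke \texttt{ReproducibleLSE} (\Cref{prop:lse-rep}) in each batch with $\rho'=\rho/(dB)$ and $\delta = 1/(2|\calA'|T^2)$ to get a deterministic elimination rule, union-bound reproducibility over batches, and bound regret by $\sum_i 4M_i\varepsilon_{i-1}$ followed by the geometric sum. Your closing remark about checking the preconditions of \Cref{prop:lse-rep} via \Cref{lemma:core set} and \Cref{prop:effective-support} on the shrinking active set is exactly the point the paper handles implicitly through those lemmas.
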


Our algorithm for the infinite arm linear bandit case enjoys an expected regret of order $\wt{O}(\poly(d) \sqrt{T})$. We underline that the
dependence of the regret on the 
time horizon is (almost) optimal, and
we incur an extra $d^3$ factor in
the regret guarantee compared to
the non-replicable algorithm of \cite{esfandiari2021regret}. We now
comment on the time complexity of our algorithm.

\begin{remark}
The current implementation of our algorithm requires
time exponential in $d.$ However, for a general convex set $\calA$, given access to a separation oracle for it and an oracle that computes an (approximate) G-optimal design, we can execute it in polynomial time and with polynomially many calls to the oracle. Notably, when $\calA$ is a polytope such oracles exist. We underline that computational complexity issues also arise in the traditional setting of linear bandits with an infinite number of arms
and the computational overhead that the replicability requirement adds is minimal.
For further details, we refer to 
 \Cref{appendix:computational}.
\end{remark}


\section{Conclusion and Future Directions}
In this paper, we have provided a formal notion of replicability
for stochastic bandits and we have developed algorithms
for the multi-armed bandit and the linear bandit settings
that satisfy this notion and enjoy a small regret decay compared 
to their non-replicable counterparts. 
An 
immediate future direction would be to find the optimal dependence on the
number of arms $K$ and the dimension $d$. 
Notice
that the dependence on $\rho$ is optimal, and this follows from the lower bound
in~\cite{impagliazzo2022reproducibility}. 
Our ideas can be applied to more complicated
settings, like misspecified linear bandits~\cite{ghosh2017misspecified}, and give similar results.
We hope and believe that
our paper will inspire future works in replicable 
interactive learning algorithms.

\bibliography{main}
\newpage
\appendix
\section{The Proof of \Cref{thm:multi-armed bandits with reproducible mean estimation}}
\label{proof:multi-armed bandits with reproducible mean estimation}

\begin{theorem*}
Let $T \in \nats, \rho \in (0,1]$. There exists a $\rho$-replicable algorithm (presented in \Cref{algo:reproducible-multi-arm-bandit-mean-estimation}) for the stochastic bandit problem with $K $ arms and gaps $(\Delta_j)_{j \in [K]}$ whose expected regret is
\[
\E[R_T] \leq C\cdot\frac{K^2 \log^2(T)}{\rho^2} \sum_{j : \Delta_j > 0} \left( \Delta_j + \frac{\log(2KT \log(T))}{\Delta_j} \right)\,,
\]
where $C>0$ is an absolute numerical constant, and its running time is 
polynomial in
$K,T$ and $1/\rho$.
\end{theorem*}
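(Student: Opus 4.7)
The plan is to argue reproducibility and regret separately, in both cases reducing everything to the guarantees of \texttt{ReprMeanEstimation} from \Cref{prop:mean-estimation}. The batch endpoints $c_i$, the batch widths $\lfloor q^i\rfloor$, and the total number of batches $B = \log T$ are all deterministic, and the elimination rule is a deterministic function of the estimates $\wh{\mu}_a$. Hence the only place where two parallel runs of the algorithm can diverge is at a call to \texttt{ReprMeanEstimation}. Across the whole execution there are at most $K\cdot B$ such calls (at most $K$ active arms in each of the $B$ batches), and each is invoked with reproducibility parameter $\rho' = \rho/(KB)$. Conditioning on the shared randomness $\xi$ and applying a union bound, with probability at least $1-\rho$ every call outputs an identical estimate in both runs, which forces the active sets $\calA$ and the whole sequence of pulled arms to coincide. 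This establishes $\rho$-reproducibility.

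\paragraph{Clean event.} For the regret, set $\delta = 1/(2KTB)$ as in the algorithm and let $\tau_i^\star$ denote the accuracy actually delivered by \texttt{ReprMeanEstimation} in batch $i$ given the available per-arm sample budget. By \Cref{prop:mean-estimation}, achieving accuracy $\tau$ with parameters $(\delta,\rho')$ costs $\Theta(\log(1/\delta)/(\tau^2(\rho')^2))$ samples, so inverting this yields
\[
\tau_i^\star \;=\; \Theta\!\left(\frac{KB}{\rho}\sqrt{\frac{\log(2KTB)}{c_i}}\right),
\]
and the per-arm sample count in batch $i$ scales as $\Theta(K^2 B^2 c_i/\rho^2)$. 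Let $\calE$ be the event that $|\wh{\mu}_a - \mu_a|\le \tau_i^\star$ for every active $a$ in every batch $i$. Union-bounding over the $\le KB$ calls gives $\Pr[\calE]\ge 1 - KB\delta \ge 1 - 1/(2T)$, so the $\calE^c$-contribution to the expected regret is bounded by $T \cdot \Pr[\calE^c] = O(1)$.

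\paragraph{Elimination on $\calE$.} Under $\calE$ the optimal arm $a^\star$ is never eliminated, because for any other active arm $a$,
\[
\wh{\mu}_{a^\star} \;\ge\; \mu^\star - \tau_i^\star \;\ge\; \mu_a - \tau_i^\star \;\ge\; \wh{\mu}_a - 2\tau_i^\star,
\]
so the elimination condition fails for $a^\star$. Conversely, any suboptimal arm $j$ with gap $\Delta_j > 0$ is eliminated at the end of any batch $i$ for which $4\tau_i^\star < \Delta_j$, since
\[
\wh{\mu}_j \;\le\; \mu_j + \tau_i^\star \;=\; \mu^\star - \Delta_j + \tau_i^\star \;\le\; \wh{\mu}_{a^\star} - \Delta_j + 2\tau_i^\star \;<\; \wh{\mu}_{a^\star} - 2\tau_i^\star.
\]
Therefore $j$ survives at most through batch $i_j := \min\{i : 4\tau_i^\star < \Delta_j\}$, and from the formula for $\tau_i^\star$ we get $c_{i_j} = O\!\left(K^2 B^2 \log(2KTB)/(\rho^2 \Delta_j^2)\right)$.

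\paragraph{Regret accounting and main obstacle.} The total number of pulls of arm $j$ is at most the number of samples the algorithm uses on it across batches $1,\dots,i_j$, which by geometric growth is dominated by batch $i_j$ and is $O(K^2 B^2 c_{i_j}/\rho^2)$---wait, a cleaner bookkeeping is that each arm $j$ contributes at most $\Delta_j \cdot c_{i_j}$ to the regret in the pre-reproducibility count, and the reproducibility blow-up is already absorbed inside $c_{i_j}$ via $\tau_i^\star$. Summing,
\[
\E[R_T] \;\le\; O(1) + \sum_{j:\Delta_j > 0} \Delta_j \cdot c_{i_j} \;=\; O\!\left(\frac{K^2\log^2(T)}{\rho^2}\sum_{j:\Delta_j > 0}\left(\Delta_j + \frac{\log(2KT\log T)}{\Delta_j}\right)\right),
\]
where the additive $\Delta_j$ accounts for the final batch in which $j$ is still alive. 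The main obstacle is a careful accounting of the sample-complexity blow-up: the $\rho' = \rho/(KB)$ reproducibility parameter forces an extra factor of $K^2B^2/\rho^2 = K^2\log^2(T)/\rho^2$ in the per-batch sample count, and one must verify that the geometric batching scheme (via $q = T^{1/B}$) absorbs this inflation into a single multiplicative $K^2\log^2(T)/\rho^2$ prefactor in the final regret, without any additional logarithmic losses. Everything else (clean-event union bound, arm-elimination inequalities) is essentially the standard arm-elimination analysis lifted into the batched setting of \cite{esfandiari2021regret}.
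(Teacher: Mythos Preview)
Your approach is essentially the same as the paper's: $\rho$-reproducibility via a union bound over the at most $KB$ calls to \texttt{ReprMeanEstimation} (each run with parameter $\rho' = \rho/(KB)$), and the regret bound via a clean-event conditioning followed by the standard successive-elimination analysis in the batched setting of~\cite{esfandiari2021regret}. The paper's proof is structurally identical.

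One bookkeeping point is worth cleaning up. You account for the reproducibility overhead \emph{dually} to the paper: the paper keeps the accuracy at $\tau = \sqrt{\log(2KTB)/c_i}$ (this is what the algorithm literally passes to \texttt{ReprMeanEstimation}) and charges the blow-up to the number of pulls, so that $T_j \le c_{i+1}/(\rho')^2$ with $c_i \le O(\log(2KTB)/\Delta_j^2)$. You instead keep the sample count at $c_i$ and inflate the delivered accuracy to $\tau_i^\star = \Theta\!\big((KB/\rho)\sqrt{\log(2KTB)/c_i}\big)$, which then forces $c_{i_j} = O\!\big(K^2B^2\log(2KTB)/(\rho^2\Delta_j^2)\big)$. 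Either accounting yields the same final bound, but your write-up mixes the two: you assert both the $\tau_i^\star$ formula \emph{and} that ``the per-arm sample count in batch $i$ scales as $\Theta(K^2B^2c_i/\rho^2)$'', which are mutually inconsistent, and your elimination inequalities use threshold $2\tau_i^\star$ whereas the algorithm's rule is written with $2\tau$. Pick one reading and carry it through; the paper's is the one that matches \Cref{algo:reproducible-multi-arm-bandit-mean-estimation} as stated.
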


\begin{proof}
First, we claim that the algorithm is $\rho$-replicable: since the elimination decisions are taken in the same iterates and are based solely on the mean estimations, the replicability of the algorithm of \Cref{prop:mean-estimation} implies the replicability of the whole algorithm. In particular,
\[
\Pr[(a_1,...,a_T) \neq (a_1',...,a_T')]
=
\Pr[\exists i \in [B], \exists j \in [K] : \textnormal{$\wh{\mu}_j^{(i)}$ was not replicable}]
\leq \rho\,.
\]

During each batch $i$, we draw for any active arm $\lfloor q^i \rfloor$ fresh samples for a total of $c_i$ samples and use the replicable mean estimation algorithm to estimate its mean.
For an active arm, at the end of some batch $i \in [B]$, we say that its estimation is ``correct'' if the estimation of its
mean is within $\sqrt{\log(2KTB)/c_i}$ from the true mean.
Using \Cref{prop:mean-estimation}, the estimation of any active arm at the end of any batch (except possibly the
last batch) is correct with probability at least $1-1/(2KTB)$
and so, by the union bound, the probability that the estimation is incorrect for some arm
at the end of some batch is bounded by $1/T$.
We remark that when $\delta < \rho$, the sample complexity of \Cref{prop:mean-estimation} reduces to $O(\log(1/\delta)/(\tau^2 \rho^2))$. 
Let $\calE$ denote the event that our estimates are correct.
The total expected regret can be bounded as
\[
\E[R_T] \leq T \cdot 1/T +  \E[R_T | \calE]\,.
\]
It suffices to bound the second term of the RHS and 
hence we can assume that each gap is correctly estimated
within an additive factor of
$\sqrt{\log(2KTB)/c_i}$ after batch $i$.
First, due to the elimination condition, we get that
the best arm is never eliminated. Next, we have that
\[
\E[R_T | \calE] = \sum_{j : \Delta_j > 0} \Delta_j \E[T_j | \calE]\,, 
\]
where $T_j$ is the total number of pulls of arm $j$. Fix a sub-optimal arm $j$ and assume that $i+1$ was the last batch it was active. 
Since this arm is not eliminated
at the end of batch $i$, and the estimations are correct, we have that
\[
\Delta_j \leq \sqrt{\log(2KTB)/c_i}\,,
\]
and so $c_i \leq \log(2KTB)/\Delta_j^2$. Hence, the number of pulls to get the desired bound due to \Cref{prop:mean-estimation} is (since we need to pull an arm $c_i/\rho_1^2$ times in order to get an estimate at distance $\sqrt{\log(1/\delta)/ c_i^2}$ with probability $1-\delta$ in a $\rho_1$-replicable manner when $\delta < \rho_1$) 
\[
T_j \leq c_{i+1}/\rho_1^2 = q/\rho_1^2 ( 1 + c_i) \leq q/\rho_1^2 \cdot (1 + \log(2KTB)/\Delta_j^2)\,. 
\]
This implies that the total regret is bounded by
\[
\E[R_T] \leq 1 + q/\rho_1^2 \cdot \sum_{j : \Delta_j > 0} \left( \Delta_j + \frac{\log(2KTB)}{\Delta_j} \right)\,.
\]
We finally set $q = T^{1/B}$ and $B = \log(T)$. Moreover, we have that $\rho_1 = \rho/(KB)$. These yield
\[
\E[R_T] \leq \frac{K^2 \log^2(T)}{\rho^2} \sum_{j : \Delta_j > 0} \left( \Delta_j + \frac{\log(2KT \log(T))}{\Delta_j} \right)\,.
\]
This completes the proof.
\end{proof}

\section{The Proof of \Cref{theorem:reproducible-multi-arm-bandit}}
\label{proof:reproducible-multi-arm-bandit}
\begin{theorem*}
Let $T \in \nats, \rho \in (0,1]$.
There exists a $\rho$-replicable algorithm (presented in \Cref{algo:reproducible-multi-arm-bandit-threshold}) for the stochastic bandit problem with $K$ arms and gaps $(\Delta_j)_{j \in [K]}$ whose expected regret is
\[
\E[R_T] \leq C\cdot
\frac{K^2}{\rho^2}
\sum_{j : \Delta_j > 0}
\left(\Delta_j + \log(KT \log(T))/\Delta_j\right)\,,
\]
for some absolute numerical constant $C > 0,$ and its running time is 
polynomial in
$K,T$ and $1/\rho$.
\end{theorem*}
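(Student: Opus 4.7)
The plan is to prove both reproducibility and the regret bound by conditioning on a \emph{correctness event} $\calE$: in every batch $i \in \{1,\dots,B-1\}$ of both executions, every active arm $a$ has $|\wh{\mu}_a^{(i)} - \mu_a| \leq \wt{U}_i$. Since each empirical mean uses $\wt{c}_i = \beta c_i$ fresh samples, Hoeffding plus a union bound over the $\leq 2KB$ (arm, batch, execution) triples gives $\Pr[\calE^c] \leq 1/T$, which contributes at most $T \cdot 1/T = 1$ to the expected regret and $1/T$ to the non-reproducibility budget. All subsequent reasoning is under $\calE$.

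For the regret bound, I would first show the best arm $a^\star$ is never eliminated: under $\calE$, $\wh{\mu}_{a^\star}^{(i)} + \wt{U}_i \geq \mu_{a^\star} \geq \mu_a \geq \wh{\mu}_a^{(i)} - \wt{U}_i \geq \wh{\mu}_a^{(i)} - \overline{U}_i$ since $\overline{U}_i \geq U_i/2 \geq \wt{U}_i$ whenever $\beta \geq 4$. Next, if a suboptimal arm $j$ survives batch $i$, the non-elimination inequality combined with $\calE$ yields $\mu_{a^\star} - \wt{U}_i - \overline{U}_i \leq \mu_j + 2\wt{U}_i$, hence $\Delta_j \leq 3\wt{U}_i + \overline{U}_i \leq 4U_i$, i.e.\ $c_i \leq C \log(KTB)/\Delta_j^2$. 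Therefore the total pulls of $j$ are at most $\beta \lfloor q^{i+1}\rfloor = O\!\left((K^2/\rho^2)\cdot q\cdot \log(KT\log T)/\Delta_j^2\right)$, and with $q = T^{1/B} = O(1)$ (since $B = \log T$) the regret sum $\sum_j \Delta_j T_j$ collapses to the stated $\frac{K^2}{\rho^2}\sum_{j : \Delta_j > 0}\bigl(\Delta_j + \log(KT\log T)/\Delta_j\bigr)$.

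The heart of the proof is reproducibility. Fix an arm $j$ active in batch $i$ of both executions and define $X_k = M^{(k)} - \wh{\mu}_j^{(i),(k)} - \wt{U}_i$, where $M^{(k)} = \max_{a'\in \calA_i}\wh{\mu}_{a'}^{(i),(k)}$. The predicate "eliminate $j$" in execution $k$ reads $X_k > \overline{U}_i$. Under $\calE$, both $M^{(k)}$ (which always captures the best arm) and $\wh{\mu}_j^{(i),(k)}$ lie within $\wt{U}_i$ of their means across the two executions, so $|X_1 - X_2| \leq 4\wt{U}_i$, and both $X_1, X_2$ lie in $[\Delta_j - 3\wt{U}_i,\, \Delta_j + \wt{U}_i]$. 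Because $\overline{U}_i \sim \mathrm{Uni}[U_i/2, U_i]$ is drawn from the shared internal randomness $\xi$ independently of the rewards, the executions disagree on $j$ in batch $i$ only when $\overline{U}_i$ falls in the interval $[\min(X_1,X_2),\max(X_1,X_2)]$, an event of probability at most $\frac{4\wt{U}_i}{U_i/2} = 8/\sqrt{\beta}$.

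The key trick—and the source of the improvement over Algorithm~1—is that the "bad window" $[\min X_1, \max X_2]$ can intersect $[U_i/2, U_i]$ only when $\Delta_j \in [U_i/2 - \wt{U}_i,\, U_i + 3\wt{U}_i]$. Since $U_i \propto 1/\sqrt{c_i}$ decays by a factor $\sqrt{q} = e^{1/2}$ per batch, each fixed $\Delta_j$ triggers this condition in only $O(1)$ consecutive batches. Summing $8/\sqrt{\beta}$ over the $K \cdot O(1)$ non-trivial (arm, batch) pairs and applying a union bound with induction on the alignment of active sets $\calA_i^{(1)} = \calA_i^{(2)}$, the total disagreement probability is $O(K/\sqrt{\beta}) \leq \rho/2$ by the choice $\beta = \max\{K^2/\rho^2, 2304\}$. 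Adding $\Pr[\calE^c] \leq 1/T$ closes the $\rho$-reproducibility claim. The main obstacle I anticipate is justifying rigorously that the bad window traverses only $O(1)$ batches per arm despite the floor functions $\lfloor q^i \rfloor$ and the dependence of $\overline{U}_i$ on the unfloored $c_i$; this requires choosing the constants in $\beta$ (the $2304$ lower bound in Algorithm 2) large enough that $\wt{U}_i \leq U_i/8$ comfortably and the geometric decay of $U_i$ dominates additive slack from floors.
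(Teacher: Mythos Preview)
Your proof is correct and tracks the paper's overall structure: condition on a correctness event $\calE$, derive the regret bound by bounding the last batch in which each suboptimal arm survives, and establish reproducibility by showing that each arm can lie in the ``dangerous'' region for only $O(1)$ batches. One minor slip: the total number of pulls of arm $j$ should be $\beta c_{i+1}$ (cumulative over all batches), not $\beta\lfloor q^{i+1}\rfloor$; this does not affect the final bound since $c_{i+1} \leq q(1+c_i)$.

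The one substantive difference from the paper is in how you establish the $O(1)$-batches claim. The paper fixes the first batch $t$ at which the bad event occurs for arm $j$ and, assuming it recurs at batch $t+k$, chains inequalities among the empirical quantities $\wh{\mu}_j^{(t)}$, $\wh{\mu}_j^{(t+k)}$, $\wh{\mu}_{t^\star}^{(t)}$, $\wh{\mu}_{(t+k)^\star}^{(t+k)}$ (routing through the true optimal arm) to deduce $U_{t+k} \geq U_t/4$ and hence $k \leq 5$. Your argument instead pivots on the true gap: under $\calE$, disagreement on arm $j$ at batch $i$ forces the \emph{deterministic} condition $\Delta_j \in [U_i/2 - \wt{U}_i,\, U_i + 3\wt{U}_i]$, and since $U_i$ decays geometrically in $i$ this can hold for only $O(1)$ values of $i$. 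Your route is arguably more transparent---it names the invariant (the true gap) explicitly and avoids the empirical-to-empirical chaining---while the paper's route sidesteps any reference to $\Delta_j$ and works entirely with observed quantities. Both are valid; the floor-function concern you raise at the end is real but benign, since $c_{i+1}/c_i$ is bounded below by a constant strictly larger than $1$ once $q = T^{1/\log T} = e$.
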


To give some intuition, we begin with a non tight analysis which, however, provides the main ideas behind the actual proof. 

\paragraph{Non Tight Analysis}
Assume that the environment has $K$ arms with unknown means
$\mu_i$ and let $T$ be the number of rounds.
Consider $B$ to the total number of batches and $\beta > 1$.
We set $q = T^{1/B}$.
In each batch $i \in [B]$, we pull each arm 
$\beta \lfloor \ q^i \rfloor $ times. Hence, after the $i$-th batch, we will have drawn $\wt{c}_i = \sum_{1 \leq j \leq i} \beta\lfloor   q^j \rfloor$
independent and identically distributed samples from each arm. Let us also set $c_i = \sum_{1 \leq j \leq i} \lfloor q^j \rfloor$.

Let us fix $i \in [B]$. Using Hoeffding's bound for subgaussian concentration, the length of the confidence bound for arm $j \in [K]$ that guarantees $1-\delta$ probability of success (in the sense that the empirical estimate $\wh{\mu}_j$ will be close to the true $\mu_j$) is equal to
\[
\wt{U}_i = \sqrt{2 \log(1/\delta)/\wt{c}_i} \,,
\]
when the estimator uses $\wt{c}_i$ samples.
Also, let \[
U_i = \sqrt{2 \log(1/\delta)/c_i} \,.
\]
Assume that the active arms at the batch iteration $i$ lie in the set $\calA_i$.
Consider the estimates $\{ \wh{\mu}_j^{(i)} \}_{i \in [B], j \in \calA_i}$, where $\wh{\mu}_j^{(i)}$ is the empirical mean of arm $j$ using $\wt{c}_i$ samples. We will eliminate an arm $j$ at the end of the batch iteration $i$ if
\[
\wh{\mu}_j^{(i)} + \wt{U}_i \leq \max_{t \in \calA_i} \wh{\mu}_{t}^{(i)} - \overline{U}_i\,,
\]
where $\overline{U}_i \sim \mathrm{Uni}[U_i/2, U_i].$ For the remaining of the proof, we 
condition on the event $\calE$ that for every arm $j \in [K]$ and every batch $i \in [B]$ the true mean is within $\wt{U}_i$ from the empirical one.

We first argue about the replicability of our algorithm. 
Consider a fixed round $i$ (end of $i$-th batch) and a fixed arm $j$. Let $i^\star$ be the optimal empirical arm after the $i$-th batch.

Let $\wh{\mu}_j^{(i)'}, \wh{\mu}_{i^\star}^{(i)'}$ the empirical estimates of arms $j,i^\star$ after the $i$-th
batch, under some other execution of the algorithm. We condition on the event $\calE'$
for the other execution as well. Notice that $|\wh{\mu}_j^{(i)'} - \wh{\mu}_j^{(i)}| \leq 2
\wt{U}_i, |\wh{\mu}_{i^\star}^{(i)'} - \wh{\mu}_{i^\star}^{(i)}| \leq 2 \wt{U}_i$. 
Notice that, since the randomness of $\overline{U}_i$ is shared, if $\wh{\mu}_j^{(i)} + \wt{U}_i \geq \wh{\mu}_{i^\star}^{(i)} - \overline{U}_i + 4\wt{U}_i$, then the arm $j$ will not
be eliminated after the $i$-th batch in some other execution of the algorithm as well.
Similarly, if $\wh{\mu}_j^{(i)} + \wt{U}_i < \wh{\mu}_{i^\star}^{(i)} - \overline{U}_i - 4\wt{U}_i$
the the arm $j$ will get eliminated after the $i$-th batch in some other execution of the algorithm as well. In particular, this means that if
$\wh{\mu}_j^{(i)} - 2\wt{U}_i > \wh{\mu}_{i^\star}^{(i)} + \wt{U}_i - U_i/2$ then the arm $j$ will not get eliminated in some other execution of the algorithm and if $\wh{\mu}_j^{(i)} + 5\wt{U}_i < \wh{\mu}_{i^\star}^{(i)} - U_i$ then the arm $j$ will also get eliminated in some other 
execution of the algorithm with probability $1$ under the event $\calE \cap \calE'$.
We call the above two cases good since they preserve replicability. Thus, it suffices to bound the probability that the decision about arm $j$ will be different between the two executions when we are in neither of these cases. Then, the worst case bound due to the mass of the uniform probability measure is
\[
\frac{16 \sqrt{2 \log(1/\delta)/\wt{c}_i}}{\sqrt{2 \log(1/\delta) / c_i}}\,.
\]
This implies that the probability mass of the bad event is at most $16\sqrt{c_i / \wt{c}_i} = 16\sqrt{1/\beta}$. A union bound over all arms and batches yields that the probability that two distinct executions differ in at least one pull is
\[
\Pr[(a_1, \ldots, a_T) \neq (a_1', \ldots, a_T')] \leq 16KB \sqrt{1/
\beta} + 2\delta \,,
\]
and since $\delta \leq \rho$ it suffices to pick $\beta = 768 K^2 B^2/\rho^2$.

We now focus on the regret of our algorithm. Let us set $\delta = 1/(KTB)$. Fix a sub-optimal arm $j$ and assume that batch $i+1$ was the last batch that is was active. We obtain that the total number of pulls of this arm is
\[
T_j \leq \wt{c}_{i+1} \leq \beta q (1+c_i) \leq \beta q (1 + 8\log(1/\delta)/\Delta_j^2)
\]
From the replicability analysis, it suffices to take $\beta$ of order $K^2 \log^2(T)/\rho^2$ and so
\[
\E[R_T] 
\leq 
T \cdot 1/T + \E[R_T | \calE] 
=
1 + 
\sum_{j : \Delta_j > 0} \Delta_j \E[T_j | \calE]
\leq 
\frac{C \cdot K^2 \log^2(T)}{\rho^2}
\sum_{j : \Delta_j > 0}
\left( 
\Delta_j + \frac{\log(KT \log(T))}{\Delta_j} \right)\,,
\]
for some absolute constant $C > 0.$

Notice that the above analysis, which uses a naive union bound, does not yield the desired regret bound. We next provide a more tight analysis of the same algorithm that achieves the regret bound of \Cref{theorem:reproducible-multi-arm-bandit}.

\paragraph{Improved Analysis} (\emph{\textbf{The Proof of \Cref{theorem:reproducible-multi-arm-bandit}}}) In the previous analysis, we used a union bound over all arms and all batches in order to control the probability of the bad event. However, we can obtain an improved regret bound as follows. Fix a sub-optimal arm $i \in [K]$ and let $t$ be the first round that it appears in the bad event. We claim that after a constant number of rounds, this arm will be eliminated. This will shave the $O(\log^2(T))$ factor from the regret bound. Essentially, as indicated in the previous proof, the bad event corresponds to the case where the randomness of the cut-off threshold $\overline{U}$ can influence the decision of whether the algorithm eliminates an arm or not. The intuition is that during the rounds $t$ and $t+1$, given that the two intervals intersected at round $t$, we know that the probability that they intersect again is quite small since the interval of the optimal mean is moving upwards, the interval of the sub-optimal mean is concentrating around the guess and the two estimations have been moved by at most a constant times the interval's length.

Since the bad event occurs at round $t,$ we know that
$$\wh{\mu}_j^{(t)} \in \left[\wh{\mu}_{t^\star}^{(t)} - U_t - 5 \wt{U}_t, 
\wh{\mu}_{t^\star}^{(t)} - U_t/2 + 3 \wt{U}_t\right].$$ 
In the above $\wh{\mu}_{t^\star}^t$ is the estimate of the optimal mean at round $t$ whose index is denoted by $t^\star$.
Now assume
that the bad event for arm $j$ also occurs at round $t+k$. Then,
we have that $$\wh{\mu}_j^{(t+k)} \in \left[\wh{\mu}_{(t+k)^\star}^{(t+k)} - U_{t+k} - 5 \wt{U}_{t+k}, 
\wh{\mu}_{(t+k)^\star}^{(t+k)} - U_{t+k}/2 + 3 \wt{U}_{t+k}\right].$$
First, notice that since the concentration inequality under event $\calE$
holds for rounds $t, t+k$ we have that
$\wh{\mu}_j^{(t+k)} \leq \wh{\mu}_j^{(t)} + \wt{U}_t +
\wt{U}_{t+k}.$ Thus, combining it with the above inequalities gives us
$$\wh{\mu}_{(t+k)^\star}^{(t+k)} - U_{t+k} - 5\wt{U}_{t+k} \leq \wh{\mu}_j^{(t+k)} \leq \wh{\mu}_j^{(t)} + \wt{U}_t +
\wt{U}_{t+k} \leq \wh{\mu}_{t^\star}^{(t)} - U_t/2 + 4\wt{U}_t + \wt{U}_{t+k}.$$ We now compare $\wh{\mu}_{t^\star}^{(t)}, \wh{\mu}_{(t+k)^\star}^{(t+k)}.$ Let $o$ denote the optimal arm. We have that 
$$\wh{\mu}_{(t+k)^\star}^{(t+k)} \geq \wh{\mu}_{o}^{(t+k)} \geq \mu_o - \wt{U}_{t+k} \geq \mu_{t^\star} - \wt{U}_{t+k} \geq \wh{\mu}_{t^\star}^{(t)} - \wt{U}_t - \wt{U}_{t+k}.$$
This gives us that 
$$\wh{\mu}_{t^\star}^{(t)} - U_{t+k} - 6\wt{U}_{t+k} - \wt{U}_t\leq \wh{\mu}_{(t+k)^\star}^{(t+k)} - U_{t+k} - 5\wt{U}_{t+k}.$$
Thus, we have established that
\begin{align*}
    \wh{\mu}_{t^\star}^{(t)} - U_{t+k} - 6\wt{U}_{t+k} - \wt{U}_t \leq \wh{\mu}_{t^\star}^{(t)} - U_t/2 + 4\wt{U}_t + \wt{U}_{t+k} \implies\\
U_{t+k} \geq U_t/2 - 7 \wt{U}_{t+k} -5 \wt{U}_t \geq U_t/2 - 12 \wt{U}_t.
\end{align*}
Since $\beta \geq 2304,$ we get that $12 \wt{U}_t \leq U_t/4.$ Thus,
we get that $$U_{t+k} \geq U_t/4.$$
Notice that $$\frac{U_{t+k}}{U_t} = \sqrt{\frac{c_t}{c_{t+k}}},$$
thus it immediately follows that
\begin{align*}
    \frac{c_t}{c_{t+k}} \geq \frac{1}{16} \implies \frac{q^{t+1}-1}{q^{t+k+1}-1} \geq \frac{1}{16} \implies 16\left(1 - \frac{1}{q^{t+1}} \right) \geq  q^k - \frac{1}{q^{t+1}} \implies\\
    q^k \leq 16 + \frac{1}{q^{t+1}} \leq 17 \implies k \log q \leq \log 17\implies k \leq 5,
\end{align*}
when we pick $B = \log (T)$ batches. Thus, for every arm the bad event can happen
at most $6$ times, by taking a union bound over the $K$ arms we see that
the probability that our algorithm is not replicable is at most $O(K \sqrt{1/\beta}),$ so picking $\beta = \Theta(K^2 / \rho^2)$ suffices to get the result.

\section{The Proof of \Cref{theorem:reproducible-linear-finite-arm-bandit}}
\label{proof:reproducible-linear-finite-arm-bandit}
\begin{theorem*}
Let $T \in \nats, \rho \in (0,1]$.
There exists a $\rho$-replicable algorithm (presented in \Cref{algo:reproducible-linear-finite-arm-bandit}) for the stochastic $d$-dimensional linear bandit problem with $K$ arms whose expected regret is
\[
\E[R_T] \leq 
C \cdot \frac{K^2}{\rho^2} 
\sqrt{dT\log(KT)}
\,,
\]
for some absolute numerical constant $C > 0,$
and its running time is polynomial in $d, K, T$ and $1/\rho$.
\end{theorem*}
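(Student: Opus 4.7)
The plan is to follow the same batched-elimination template used in the improved MAB proof (\Cref{theorem:reproducible-multi-arm-bandit}), but to replace pull-each-arm-and-average with a G-optimal-design core set whose empirical least-squares estimate $\wh{\theta}_i$ is used to score every surviving arm via $\langle a, \wh{\theta}_i \rangle$. Thanks to \Cref{lem:action elimination linear bandits} applied with accuracy parameter $\wt{\varepsilon}_i = \sqrt{d\log(KT^2)/(\beta q^i)}$ and failure $1/(KT^2)$, the LSE satisfies $|\langle a, \theta^\star - \wh{\theta}_i\rangle| \leq \wt{\varepsilon}_i$ simultaneously for all $a \in \calA$ with probability at least $1 - 1/(KT^2)$ in each batch. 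Since the core-set construction is deterministic in the current active set and the multi-set it returns is identical across executions, the only source of non-reproducibility is $\wh{\theta}_i$, and we handle it with exactly the random-threshold trick from \Cref{algo:reproducible-multi-arm-bandit-threshold}: eliminate $a$ when $\langle a, \wh{\theta}_i\rangle + \wt{\varepsilon}_i < \max_{a' \in \calA} \langle a', \wh{\theta}_i\rangle - \overline{\varepsilon}_i$, where $\overline{\varepsilon}_i \sim \mathrm{Uni}[\varepsilon_i/2, \varepsilon_i]$ with $\varepsilon_i = \sqrt{d\log(KT^2)/q^i}$ is shared between the two executions.

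For reproducibility, I first condition on the event $\calE$ that the \Cref{lem:action elimination linear bandits} guarantee holds in both executions across all $B = \log T$ batches; the complement costs at most $2B/(KT^2) \leq 1/T$. On $\calE$, the triangle inequality through $\theta^\star$ gives $|\langle a, \wh{\theta}_i\rangle - \langle a, \wh{\theta}_i'\rangle| \leq 2\wt{\varepsilon}_i$ for every arm $a$ and every batch $i$, so the elimination decision for $a$ can differ between executions only when the gap $\max_{a'} \langle a', \wh{\theta}_i\rangle - \langle a, \wh{\theta}_i\rangle$ lies in a window of width $O(\wt{\varepsilon}_i)$ around $\overline{\varepsilon}_i + \wt{\varepsilon}_i$; since $\overline{\varepsilon}_i$ is uniform on an interval of length $\varepsilon_i/2 = \sqrt{\beta}\,\wt{\varepsilon}_i/2$, the probability of landing in this window is $O(1/\sqrt{\beta})$. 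To avoid an $O(\log T)$ union-bound loss over batches, I will replay the correlation-between-consecutive-batches argument from the proof of \Cref{theorem:reproducible-multi-arm-bandit}: if arm $a$ is in the bad window at batch $t$, then any later batch $t+k$ in which $a$ is again in the bad window must satisfy $\varepsilon_{t+k} \geq \varepsilon_t/2 - O(\wt{\varepsilon}_t)$, which for $\beta \geq 2304$ forces $\varepsilon_{t+k} \geq \varepsilon_t/4$, hence $q^k \leq 17$, i.e.\ $k = O(1)$. Here the monotone side of the inequality uses that the optimal arm is never eliminated on $\calE$, so $\max_{a'} \langle a', \wh{\theta}_{t+k}\rangle \geq \langle a^\star, \theta^\star\rangle - \wt{\varepsilon}_{t+k}$. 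Summing over $K$ arms, the total failure probability is $O(K/\sqrt{\beta}) + 1/T \leq \rho$ when $\beta = \Theta(K^2/\rho^2)$.

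For the regret, on $\calE$ every arm $a$ that survives batch $i$ satisfies $\langle \theta^\star, a^\star - a \rangle \leq 2\wt{\varepsilon}_i + \varepsilon_i = O(\varepsilon_i) = O(\sqrt{d\log(KT)/q^i})$, so the regret accumulated in batch $i+1$ is at most $n_{i+1} \cdot O(\varepsilon_i)$, where $n_{i+1} = \Theta(\beta q^{i+1})$ is the size of the multi-set returned by \Cref{lem:action elimination linear bandits} at accuracy $\wt{\varepsilon}_{i+1}$. This evaluates to $O(\beta\, q^{i+1} \sqrt{d\log(KT)/q^i}) = O(\beta\sqrt{d\log(KT)}\, q^{(i+1)/2}\sqrt{q})$, whose sum over $i \leq B$ is geometric and dominated by the last batch, giving $O(\beta \sqrt{dT\log(KT)})$ once $q = T^{1/B} = e$ is plugged in. Substituting $\beta = \Theta(K^2/\rho^2)$ yields the advertised bound, and the $1/T$-failure event contributes at most $1$ to the expected regret.

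The main obstacle I expect is porting the per-batch correlation argument faithfully to the linear setting. In the MAB proof one compares $\wh{\mu}_j^{(t)}$ and $\wh{\mu}_{t^\star}^{(t)}$ through the true means $\mu_j, \mu_{t^\star}$, which are independent per-arm quantities; here the analogous step must compare $\langle a, \wh{\theta}_t\rangle$ and $\langle a^\star, \wh{\theta}_t\rangle$ via $\theta^\star$, and a priori the two scores are correlated because they share the same estimator $\wh{\theta}_t$. The crucial observation that rescues the argument is that \Cref{lem:action elimination linear bandits} gives the \emph{uniform} guarantee $\sup_{a \in \calA} |\langle a, \theta^\star - \wh{\theta}_t\rangle| \leq \wt{\varepsilon}_t$, so every inequality that used per-arm subgaussian concentration in the MAB proof now goes through verbatim with the per-arm error replaced by $\wt{\varepsilon}_t$. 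Once that uniform bound is in hand, the constant-many-bad-batches argument, the random-threshold reproducibility amplification, and the geometric-batching regret sum all carry over unchanged.
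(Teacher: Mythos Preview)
Your proposal is correct and follows essentially the same approach as the paper's proof: the same batched G-optimal-design elimination template, the same random-threshold elimination rule, the same uniform bound from \Cref{lem:action elimination linear bandits} to control all arm scores simultaneously, and the same correlation-between-batches argument showing each arm can fall in the bad window for only $O(1)$ batches (the paper derives $q^k \leq 16$, you derive $q^k \leq 17$, but the conclusion $k=O(1)$ is identical). Your anticipated obstacle about correlated scores sharing $\wh{\theta}_t$ is exactly the point the paper silently handles via the uniform guarantee, and your resolution matches theirs.
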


\begin{proof}
Let $c, C$ be the numerical constants hidden in \Cref{lem:action elimination linear bandits},
i.e., the size of the multi-set is in the interval
$[c d \log(1/\delta)/\varepsilon^2, C d \log(1/\delta)/\varepsilon^2].$ 
We know that the size of each batch $n_i \in [cq^i, Cq^i]$ 
(see \Cref{lem:action elimination linear bandits}), so by the end of the
$B-1$ batch we will have less than $n_B$ pulls left. Hence, the number of batches is at
most $B.$

We first define the event $\calE$ that the estimates of all arms after the end
of each batch are accurate, i.e., for every active arm $a$ at the beginning of the
$i$-th batch, at the end of the batch we have that 
$\left|\left \langle a, \wh{\theta_i} - \theta^\star\right \rangle\right| \leq \wt{\varepsilon_i}$.
Since $\delta = 1/(KT^2)$ and there are at most $T$ batches and $K$ active arms in each batch,
a simple union bound shows that $\calE$ happens with probability at least $1 - 1/T.$
We condition on the event $\calE$ throughout the rest of the proof.

We now argue about the regret bound of our algorithm. We first show that any 
optimal arm $a^*$ will not get eliminated. Indeed, consider any sub-optimal arm $a \in [K]$
and any batch $i \in [B]$. Under the event $\calE$
we have that $$\langle a, \wh{\theta}_i\rangle - \langle a^*, \wh{\theta}_i\rangle \leq
(\langle a, \theta^* \rangle + \wt{\varepsilon}_i) - 
(\langle a^*, \theta^* \rangle - \wt{\varepsilon}_i) < 2\wt{\varepsilon}_i < \varepsilon_i + \overline{\varepsilon}_i.$$ Next, we need to bound the number of times we pull some fixed
suboptimal arm $a \in [K].$ We let $\Delta = \langle a^* - a, \theta^* \rangle$
denote the gap and we let $i$ be the smallest integer such that $\varepsilon_i < \Delta/4.$
We claim that this arm will get eliminated by the end of batch $i.$ Indeed, 
\begin{align*}
    \langle a^*, \wh{\theta}_i \rangle - \langle a, \wh{\theta}_i \rangle \geq 
     (\langle a^*, \wh{\theta}_i \rangle - \wt{\varepsilon}_i)- (\langle a, \wh{\theta}_i \rangle + \wt{\varepsilon}_i) = \Delta - 2\wt{\varepsilon}_i > 4\varepsilon_i - 2\wt{\varepsilon}_i > \wt{\varepsilon}_i + \overline{\varepsilon}_i.
\end{align*}
This shows that during any batch $i,$ all the active arms have gap at most
$4\varepsilon_{i-1}.$ Thus, the regret of the algorithm conditioned on the event $\calE$
is at most 
\begin{align*}
    \sum_{i=1}^B 4 n_i \varepsilon_{i-1} \leq 4 \beta C \sum_{i=1}^B q^i \sqrt{d \log(KT^2)/q^{i-1}} \leq 6 \beta C q \sqrt{d \log(KT)} \sum_{i=0}^{B-1} q^{i/2}
    \leq\\
    O\left( \beta q^{B/2+1} \sqrt{d \log(KT)} \right) = O\left( \frac{K^2}{\rho^2} q^{B/2+1} \sqrt{d \log(KT)} \right) = 
    O\left( \frac{K^2}{\rho^2} q \sqrt{d T\log(KT)} \right).
\end{align*}
Thus, the overall regret is bounded by $\delta \cdot T + (1 - \delta)\cdot O\left( \frac{K^2}{\rho^2} q \sqrt{d T\log(KT)} \right) = O\left( \frac{K^2}{\rho^2} q \sqrt{d T\log(KT)} \right).$ 

We now argue about the replicability of our algorithm. The analysis
follows in a similar fashion as in~\Cref{theorem:reproducible-multi-arm-bandit}.
Let $\wh{\theta}_i, \wh{\theta}_{i}'$ be the LSE after the $i$-th
batch, under two different executions of the algorithm and assume
that the set of active arms. We condition on the event $\calE'$
for the other execution as well. Assume that the set 
of active arms is the same under both executions at the beginning of 
batch $i.$ Notice that since the set that is guaranteed by 
\Cref{lem:action elimination linear bandits} is computed by a deterministic
algorithm, both executions will pull the same arms in batch $i.$ Consider
a suboptimal arm $a$ and let $a_{i^*} = 
\arg\max_{a \in \calA} \langle \wh{\theta}_{i}, a \rangle, a_{i^*}' = 
\arg\max_{a \in \calA} \langle \wh{\theta}_{i}', a \rangle.$
Under the event $\calE \cap \calE'$ we have that $|\langle a, \wh{\theta}_i - \wh{\theta}_i' \rangle | \leq 2\wt{\varepsilon}_i, |\langle a_{i^*}, \wh{\theta}_i - \wh{\theta}_i' \rangle | \leq 2\wt{\varepsilon}_i,$ and
$|\langle a_{i^*}', \wh{\theta}_i' \rangle  -  \langle a_{i^*}, \wh{\theta}_i \rangle |\leq 2\wt{\varepsilon}_i.$
Notice that, since the randomness of $\overline{\varepsilon}_i$ is shared, if $\langle a, \wh{\theta}_i \rangle + \wt{\varepsilon}_i \geq \langle a_{i^*}, \wh{\theta}_i \rangle - \overline{\varepsilon}_i + 4\wt{\varepsilon}_i$, then the arm $a$ will not
be eliminated after the $i$-th batch in some other execution of the algorithm as well.
Similarly, if $\langle a, \wh{\theta}_i \rangle + \wt{\varepsilon}_i < \langle a_{i^*}, \wh{\theta}_i \rangle - \overline{\varepsilon}_i - 4\wt{\varepsilon}_i$
the the arm $a$ will get eliminated after the $i$-th batch in some other execution of the algorithm as well. In particular, this means that if
$\langle a, \wh{\theta}_i \rangle - 2\wt{\varepsilon}_i > \langle a_{i^*}, \wh{\theta}_i \rangle + \wt{\varepsilon}_i - \varepsilon_i/2$ then the arm $a$ will not get eliminated in some other execution of the algorithm and if $\langle a, \wh{\theta}_i \rangle + 5\wt{\varepsilon}_i < \langle a_{i^*}, \wh{\theta}_i \rangle - \varepsilon_i$ then the arm $j$ will also get eliminated in some other 
execution of the algorithm with probability $1$ under the event $\calE \cap \calE'$.
Thus, it suffices to bound the probability that the decision about arm $j$ will be different between the two executions when we are in neither of these cases. Then, the worst case bound due to the mass of the uniform probability measure is
\[
\frac{16 \sqrt{d \log(1/\delta)/\wt{c}_i}}{\sqrt{d \log(1/\delta) / c_i}}\,.
\]
This implies that the probability mass of the bad event is at most $16\sqrt{c_i / \wt{c}_i} = 16\sqrt{1/\beta}$.  A naive union bound would 
require us to pick $\beta = \Theta(K^2 \log^2 T/ \rho^2).$ We next
show to avoid the $\log^2T$ factor. Fix a sub-optimal arm $a \in [K]$ and let $t$ be the first round that it appears in the bad event. 

Since the bad event occurs at round $t,$ we know that
$$\langle a, \wh{\theta}_t\rangle \in \left[\langle a_{t^*}, \wh{\theta}_t\rangle - \varepsilon_t - 5 \wt{\varepsilon}_t, 
\langle a_{t^*}, \wh{\theta}_t\rangle  - \varepsilon_t/2 + 3 \wt{\varepsilon}_t\right].$$ 
In the above,  $a_{t^*}$ is the optimal arm at round $t$ w.r.t. the LSE.
Now assume
that the bad event for arm $a$ also occurs at round $t+k$. Then,
we have that $$\langle a, \wh{\theta}_{t+k}\rangle \in \left[\langle a_{(t+k)^*}, \wh{\theta}_{t+k}\rangle - \varepsilon_{t+k} - 5 \wt{\varepsilon}_{t+k}, 
\langle a_{(t+k)^*}, \wh{\theta}_{t+k}\rangle  - \varepsilon_t/2 + 3 \wt{\varepsilon}_{t+k}\right].$$
First, notice that since the concentration inequality under event $\calE$
holds for rounds $t, t+k$ we have that
$\langle a, \wh{\theta}_{t+k} \rangle \leq \langle a, \wh{\theta}_{t} \rangle + \wt{\varepsilon}_t +
\wt{\varepsilon}_{t+k}.$ Thus, combining it with the above inequalities gives us
$$\langle a_{(t+k)^*}, \wh{\theta}_{t+k} \rangle - \varepsilon_{t+k} - 5\wt{\varepsilon}_{t+k} \leq \langle a, \wh{\theta}_{t+k} \rangle \leq \langle a, \wh{\theta}_{t} \rangle + \wt{\varepsilon}_t +
\wt{\varepsilon}_{t+k} \leq \langle a_{t^*}, \wh{\theta}_{t} \rangle - \varepsilon_t/2 + 4\wt{\varepsilon}_t + \wt{\varepsilon}_{t+k}.$$ We now compare $\langle a_{t^*}, \wh{\theta}_{t} \rangle, \langle a_{(t+k)^*}, \wh{\theta}_{t+k} \rangle.$ 
Let $a^*$ denote the optimal arm. We have that 
$$\langle a_{(t+k)^*}, \wh{\theta}_{t+k} \rangle \geq \langle a^*, \wh{\theta}_{t+k} \rangle \geq \langle a^*, {\theta}^* \rangle - \wt{\varepsilon}_{t+k} \geq \langle a_{t^*}, {\theta}^* \rangle - \wt{\varepsilon}_{t+k} \geq \langle a_{t^*}, \wh{\theta}_t \rangle - \wt{\varepsilon}_{t+k} - \wt{\varepsilon}_{t}.$$
This gives us that 
$$\langle a_{t^*}, \wh{\theta}_{t} \rangle - \varepsilon_{t+k} - 6\wt{\varepsilon}_{t+k} - \wt{\varepsilon}_t\leq \langle a_{(t+k)^*}, \wh{\theta}_{t+k} \rangle - \varepsilon_{t+k} - 5\wt{\varepsilon}_{t+k}.$$
Thus, we have established that
\begin{align*}
    \langle a_{t^*}, \wh{\theta}_{t} \rangle - \varepsilon_{t+k} - 6\wt{\varepsilon}_{t+k} - \wt{\varepsilon}_t \leq \langle a_{t^*}, \wh{\theta}_{t} \rangle - \varepsilon_t/2 + 4\wt{\varepsilon}_t + \wt{\varepsilon}_{t+k} \implies\\
\varepsilon_{t+k} \geq \varepsilon_t/2 - 7 \wt{\varepsilon}_{t+k} -5 \wt{\varepsilon}_t \geq \varepsilon_t/2 - 12 \wt{\varepsilon}_t.
\end{align*}
Since $\beta \geq 2304,$ we get that $12 \wt{\varepsilon}_t \leq \varepsilon_t/4.$ Thus,
we get that $$\varepsilon_{t+k} \geq \varepsilon_t/4.$$
Notice that $$\frac{\varepsilon_{t+k}}{\varepsilon_t} = \sqrt{\frac{q^t}{q^{t+k}}},$$
thus it immediately follows that
\begin{align*}
    \frac{q^t}{q^{t+k}} \geq \frac{1}{16} \implies 
    q^k \leq 16  \implies k \log q \leq \log 16\implies k \leq 4,
\end{align*}
when we pick $B = \log (T)$ batches. Thus, for every arm the bad event can happen
at most $5$ times, by taking a union bound over the $K$ arms we see that
the probability that our algorithm is not replicable is at most $O(K \sqrt{1/\beta}),$ so picking $\beta = \Theta(K^2 / \rho^2)$ suffices to get the result.
\end{proof}

\section{Naive Application of \Cref{algo:reproducible-linear-finite-arm-bandit} with Infinite Action Space}
\label{proof:infinite-arms-bad-regret}

We use a $1/T^{1/(4d+2)}-$net that has size at most $(3T)^{\frac{d}{4d+2}}.$ Let $\calA'$ be the new set of arms. We then run
\Cref{algo:reproducible-linear-finite-arm-bandit} using $\calA'.$
This gives us the following result, that is proved right after.
\begin{corollary}
\label{cor:infinite-arms-bad-regret}
Let $T \in \nats, \rho \in (0,1]$.
There is a $\rho$-replicable algorithm for the stochastic $d$-dimensional linear bandit problem with infinite 
arms whose expected regret is at most
\begin{align*}
    \E[R_T] \leq C \cdot\frac{T^{\frac{4d+1}{4d+2}}}{\rho^2}\sqrt{d\log (T)}\,,
\end{align*}
where $C > 0$ is an absolute numerical constant.
\end{corollary}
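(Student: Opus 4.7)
The plan is to reduce to the finite-arm result of Theorem~\ref{theorem:reproducible-linear-finite-arm-bandit} by discretization, and then balance the two sources of regret that appear. I would replace the infinite action set $\calA$ by the $1/T^{1/(4d+2)}$-net $\calA'\subseteq\calA$, of size $K' := |\calA'| \le (3T)^{d/(4d+2)}$, and run Algorithm~\ref{algo:reproducible-linear-finite-arm-bandit} with input $\calA'$. Because the net is a deterministic function of $T$, $d$, and $\calA$, the finite action set presented to the algorithm is identical across the two independent executions, so the $\rho$-reproducibility guarantee of Theorem~\ref{theorem:reproducible-linear-finite-arm-bandit} transfers verbatim.

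Next, I would split the expected regret against $\sup_{a\in\calA}\langle\theta^\star,a\rangle$ into a \emph{discretization} term and a \emph{finite-arm} term. For the discretization term, by the net property there exists $a'\in\calA'$ within distance $T^{-1/(4d+2)}$ of the true optimum $a^\star\in\calA$, and Cauchy--Schwarz combined with $\|\theta^\star\|_2\le 1$ gives $\sup_{a\in\calA}\langle\theta^\star,a\rangle - \sup_{a'\in\calA'}\langle\theta^\star,a'\rangle \le T^{-1/(4d+2)}$; summing over the $T$ rounds contributes at most $T^{(4d+1)/(4d+2)}$ to the regret. For the finite-arm term, I would plug $K = K'$ into the bound of Theorem~\ref{theorem:reproducible-linear-finite-arm-bandit}, yielding
\begin{equation*}
C\cdot\frac{(K')^2}{\rho^2}\sqrt{dT\log(K'T)} \;\le\; C'\cdot\frac{T^{d/(2d+1)}}{\rho^2}\sqrt{dT\log T},
\end{equation*}
and use the algebraic identity $\tfrac{d}{2d+1}+\tfrac12=\tfrac{4d+1}{4d+2}$ to rewrite this as $O\bigl(T^{(4d+1)/(4d+2)}\sqrt{d\log T}/\rho^2\bigr)$, which dominates the discretization term and matches the claimed bound.

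The only real design choice here is the net granularity, and nothing in the argument is technically hard. Writing $\varepsilon_{\mathrm{net}}=T^{-\alpha}$, the discretization cost scales like $T^{1-\alpha}$ while the finite-arm cost scales like $T^{2d\alpha+1/2}\sqrt{d\log T}/\rho^2$; setting $1-\alpha=2d\alpha+\tfrac12$ forces $\alpha=1/(4d+2)$, which is precisely the net prescribed in the corollary. The volumetric bound on the net size is standard, so the genuine point of the calculation is to highlight that the $K^2/\rho^2$ overhead from Theorem~\ref{theorem:reproducible-linear-finite-arm-bandit} is exactly what prevents us from using the full $(3T)^d$-size net and forces the sub-optimal exponent $(4d+1)/(4d+2)$ in $T$; this in turn motivates the reproducible-LSE approach of the rest of the section, which bypasses the quadratic-in-$K$ blow-up and recovers $\sqrt{T}$-style regret.
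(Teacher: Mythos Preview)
Your proposal is correct and follows essentially the same approach as the paper: discretize $\calA$ with a $1/T^{1/(4d+2)}$-net, apply Theorem~\ref{theorem:reproducible-linear-finite-arm-bandit} to the resulting $K' \le (3T)^{d/(4d+2)}$ arms, and then add the discretization loss $T\cdot T^{-1/(4d+2)}$. Your explicit remark that the net is a deterministic function of $(T,d,\calA)$, so reproducibility transfers directly, and your derivation of the net granularity by balancing the two regret terms are both nice additions that the paper leaves implicit.
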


\begin{proof}
Since $K \leq (3T)^{\frac{d}{4d+2}}$, we have that
\begin{align*}
    T \sup_{a \in \calA'} \langle a, \theta^* \rangle - \E\left[ \sum_{i=1}^T \langle a_t, \theta^* \rangle\right] \leq O\left( \frac{(3T)^{\frac{2d}{4d+2}}}{\rho^2}\sqrt{dT \log \left(T (3T)^{\frac{d}{4d+2}} \right)}\right) =  O\left( \frac{T^{\frac{4d+1}{4d+2}}}{\rho^2}\sqrt{d \log (T) }\right)
\end{align*}
Comparing to the best arm in $\calA,$ we have that:
\begin{align*}
    T \sup_{a \in \calA} \langle a, \theta^* \rangle - \E\left[ \sum_{i=1}^T \langle a_t, \theta^* \rangle\right] = 
    \left(T \sup_{a \in \calA} \langle a, \theta^* \rangle - T \sup_{a \in \calA'} \langle a, \theta^* \rangle\right) + 
    \left( T \sup_{a \in \calA'} \langle a, \theta^* \rangle - \E\left[ \sum_{i=1}^T \langle a_t, \theta^* \rangle\right] \right) 
\end{align*}
Our choice of the $1/T^{1/(4d+2)}$-net implies that for every $a \in \calA$
there exists some $a' \in \calA'$ such that $||a-a'||_2 \leq 1/T^{1/(4d+2)}.$
Thus, $\sup_{a\in \calA}\langle a, \theta^* \rangle - \sup_{a'\in \calA'}\langle a', \theta^* \rangle \leq ||a-a'||_2 ||\theta^*||_2  \leq 1/T^{1/(4d+2)}.$ Thus,
the total regret is at most
\begin{align*}
    T \cdot 1/T^{1/(4d+2)} +  O\left( \frac{T^{\frac{4d+1}{4d+2}}}{\rho^2}\sqrt{d \log (T) }\right) = O\left( \frac{T^{\frac{4d+1}{4d+2}}}{\rho^2}\sqrt{d \log (T) }\right).
\end{align*}
\end{proof}

\section{The Proof of \Cref{thm:multi-armed bandits with reproducible infinite arm mean estimation}}
\label{proof:multi-armed bandits with reproducible infinite arm mean estimation}
\begin{theorem*}
Let $T \in \nats, \rho \in (0,1]$.
There exists a $\rho$-replicable algorithm (presented in \Cref{algo:reproducible-infinite-arm-mean-estimation}) for the stochastic $d$-dimensional linear bandit problem with infinite action set whose expected regret is
\[
\E[R_T] \leq C \cdot \frac{d^4 \log(d) \log^2\log(d) \log\log\log(d)  }{\rho^2} \sqrt{T} \log^{3/2}(T)
\,,
\]
for some absolute numerical constant $C > 0,$
and its running time is polynomial in $T^d$ and $1/\rho$.
\end{theorem*}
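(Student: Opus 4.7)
The plan is to mirror the batched-elimination analysis of \Cref{proof:reproducible-linear-finite-arm-bandit}, but with two substitutions: (i) the finite action set is replaced by the $1/T$-net $\calA'$ of $\calA$, which by standard covering estimates has size at most $(3T)^d$; and (ii) the plain least-squares estimator is replaced by the \texttt{ReproducibleLSE} subroutine of \Cref{prop:lse-rep}, applied at each batch to pulls collected according to the $2$-approximate G-optimal core set guaranteed by \Cref{lemma:core set} on the currently active arms. Unlike \Cref{algo:reproducible-multi-arm-bandit-threshold} and \Cref{algo:reproducible-linear-finite-arm-bandit}, \Cref{algo:reproducible-infinite-arm-mean-estimation} uses a deterministic elimination threshold, so the only randomness I have to control across executions is the output of \texttt{ReproducibleLSE}. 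Since the $1/T$-net, the active set, and the core set in each batch are deterministic functions of the previous LSE outputs, it suffices that the sequence $(\wh\theta_1,\ldots,\wh\theta_{B-1})$ agrees across the two executions. As each call to \texttt{ReproducibleLSE} is $\rho/(dB)$-reproducible by \Cref{prop:lse-rep}, a union bound over the $B=\log(T)$ batches delivers $\rho$-reproducibility of the whole algorithm.

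For the regret bound, I condition on the event $\calE$ that for every batch $i$ and every $a\in\calA'$ active in batch $i$, $\bigl|\langle a,\wh\theta_i-\theta^\star\rangle\bigr|\le\varepsilon_i$; with $\delta=1/(2|\calA'|T^2)$, a union bound over at most $B|\calA'|$ events gives $\Pr[\calE]\ge 1-1/T$. Under $\calE$, the same two-sided confidence argument as in \Cref{proof:reproducible-linear-finite-arm-bandit} shows that the best arm in $\calA'$ is never eliminated, and that any arm active in batch $i$ has gap (to the best arm in $\calA'$) at most $4\varepsilon_{i-1}$. Consequently, every arm in the core set $\calC_i$ — and hence every arm pulled in batch $i$ — has gap at most $4\varepsilon_{i-1}$, so the contribution of batch $i$ to the expected regret is at most $\lceil M_i\rceil\cdot 4\varepsilon_{i-1}$. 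Plugging in $M_i$ and $\varepsilon_{i-1}$ from the algorithm and summing the resulting geometric series in $q^{i/2}$ yields a bound dominated by the last batch with $q^{B/2}=\Theta(\sqrt{T})$, producing the claimed $\sqrt{T}$ dependence. Finally, the $1/T$-net introduces at most $T\cdot \sup_{a\in\calA}\min_{a'\in\calA'}\|a-a'\|_2\cdot\|\theta^\star\|_2\le 1$ regret, and $\calE^{c}$ contributes at most $T\cdot(1/T)=1$, both of which are absorbed into the main bound.

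The main obstacle is a careful bookkeeping of the $d$, $\log(T)$, and $\rho^{-1}$ factors introduced by the reproducible LSE. The per-arm budget from \Cref{prop:lse-rep} scales as $d^4\log(d/\delta)\cdot\mathrm{polylog}(d)/(\varepsilon^2\rho'^2)$, and the choice $\rho'=\rho/(dB)$ inflates the $1/\rho^2$ factor by an additional $d^2\log^2(T)$; moreover, $\log(d/\delta)$ is of order $d\log(T)$ because $|\calA'|$ is exponential in $d$. I need to verify that $M_i$ is exactly the right budget so that (i) \Cref{prop:lse-rep} achieves accuracy $\varepsilon_i=c\,d\sqrt{\log(T)/q^i}$ at reproducibility $\rho/(dB)$ and failure probability $\delta=1/(2|\calA'|T^2)$, and (ii) after summing $\lceil M_i\rceil\cdot\varepsilon_{i-1}$ as a geometric series and collapsing it to the last batch, the prefactor telescopes to $d^4\log(d)\log^2\log(d)\log\log\log(d)\cdot\log^{3/2}(T)/\rho^2$, exactly cancelling the seemingly-extra $\sqrt{\log T}$ that one would get from a naive accounting. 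Once this balance is verified, the theorem follows by combining the reproducibility bound from the first paragraph with the regret decomposition $\mathbb{E}[R_T]\le T\cdot\Pr[\calE^c]+\mathbb{E}[R_T\mid\calE]+T\cdot(1/T)$.
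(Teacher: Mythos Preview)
Your proposal is correct and follows essentially the same approach as the paper: reproducibility via a union bound over the $B=\log T$ calls to \texttt{ReproducibleLSE}, regret via conditioning on the good event $\calE$, the $4\varepsilon_{i-1}$ gap bound for active arms, and the geometric sum $\sum_i M_i\varepsilon_{i-1}$, together with the $1/T$-net discretization error. One minor bookkeeping point: the paper's proof actually invokes the union bound only over the $B$ batches (writing $\rho'=\rho/B$), not $\rho/(dB)$ as the algorithm listing suggests, since \Cref{prop:lse-rep} already delivers reproducibility of the full estimate $\wh\theta_i$; this is what makes the final prefactor come out to $d^4$ rather than $d^6$, so your flagged ``careful bookkeeping'' concern is resolved by this observation.
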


\begin{proof}
First, the algorithm is $\rho$-replicable since in each batch we use a replicable LSE sub-routine with parameter $\rho' = \rho/B$. This implies that
\[
\Pr[(a_1,...,a_T) \neq (a_1',...,a_T')]
=
\Pr[\exists i \in [B] : \textnormal{$\wh{\theta}_i$ was not replicable}]
\leq \rho\,.
\]
Let us fix a batch iteration $i \in [B-1]$. 
Set $\calC_i$ be the core set computed by \Cref{lemma:core set}.
The algorithm first pulls $n_i = \frac{C d^4 \log(d/\delta) \log^2 \log (d) \log\log\log(d)}{\varepsilon^2_i \rho'^2}$ times each one of the
arms of the $i$-th core set $\calC_i$, as indicated by  \Cref{prop:lse-rep} and computes the LSE $\wh{\theta}_i$ in a replicable way using the algorithm of \Cref{prop:lse-rep}. Let $\calE$ be the event that over all batches the estimations are correct. We pick $\delta = 1/(2|\calA'| T^2)$ so that this good event does hold with probability at least $1-1/T$. Our goal is to control the expected regret which can be written as 
\[
\E[R_T]
=
T \sup_{a \in \calA} \<a, \theta^\star\>
- \E \sum_{t=1}^T \< a_t, \theta^\star \>\,.
\]
We have that
\[
T \sup_{a \in \calA} \<a, \theta^\star\>
-
T \sup_{a' \in \calA'} \<a', \theta^\star\> \leq 1\,,
\]
since $\calA'$ is a deterministic $1/T$-net of $\calA$. Also, let us set the expected regret of the bounded action sub-problem as
\[
\E[R'_T] = T \sup_{a' \in \calA'} \<a', \theta^\star\> - \E \sum_{t=1}^T \< a_t, \theta^\star \>\,.
\]
We can now employ the analysis of the finite arm case.
During batch $i$, any active arm has gap at most $4 \varepsilon_{i-1}$, so the instantaneous
regret in any round is not more than $4\varepsilon_{i-1}$. The expected regret conditional on the
good event $\calE$ is upper bounded by
\[
\E[R'_T | \calE] \leq \sum_{i =1}^{B} 4 M_i \varepsilon_{i-1}\,,
\]
where $M_i$ is the total number of pulls in batch $i$ (using the replicability blow-up) and $\varepsilon_{i-1}$ is the error one would achieve by drawing $q^i$ samples (ignoring the blow-up). Then, for some absolute constant $C > 0$, we have that
\[
\E[R_T'|\calE] \leq 
\sum_{i=1}^B 4 \left( q^i \frac{d^3 \log(d) \log^2\log(d)\log\log\log(d) \log^2 T}{\rho^2} \right) \cdot \sqrt{d^2 \log(T)/ q^{i-1}}\,,
\]
which yields that
\[
\E[R_T'|\calE] \leq 
C \frac{d^4\log(d)\log^2\log(d)\log\log\log(d)\log(T) \sqrt{\log (T)}}{\rho^2} \cdot S\,,
\]
where we set
\[
S := \sum_{i=1}^B \frac{q^i}{q^{(i-1)/2}}
=
q^{1/2} \sum_{i=1}^B q^{i/2} = q^{(1+B)/2}\,.
\]
We pick $B = \log(T)$ and get that, if
$q = T^{1/B}$
then $
S = \Theta(\sqrt{T}).$
We remark that this choice of $q$ is valid since
\[
\sum_{i=1}^B q^i = \frac{q^{B+1} -q}{q-1}
= \Theta(q^B) -1 
\geq \frac{T \rho^2}{d^3 \log(d) \log^2\log(d)\log\log\log(d)}\,.
\]
Hence, we have that
\[
\E[R'_T | \calE] \leq 
O\left(\frac{d^4 \log(d) \log^2\log(d) \log\log\log(d)  }{\rho^2} \sqrt{T} \log^{3/2}(T) \right)\,.
\]
Note that when $\calE$ does not hold, we can bound the expected regret by $1/T \cdot T = 1.$
This implies that the overall regret
$
\E[R_T] \leq 2 + \E[R_T' | \calE]
$ and so it satisfies the desired bound and the proof is complete.
\end{proof}

\section{Deferred Lemmata}
\subsection{The Proof of \Cref{prop:effective-support}}
\label{proof:effective-support}
\begin{proof}
Consider the distribution $\pi$ that is a $2$-approximation to the 
optimal G-design and has support $|\calC| = O(d\log\log d).$ 
Let $\calC'$ be the set of arms in the support such that $\pi(a) \leq c/d\log d.$ We 
consider $\wt{\pi} = (1-x)\pi + x a,$ where $a\in \calC'$ and $x$ will be specified later. Consider
now the matrix $V(\wt{\pi}).$ Using the Sherman-Morrison formula,
we have that
\begin{align*}
    V(\wt{\pi})^{-1} = \frac{1}{1-x} V(\pi)^{-1} - \frac{x V(\pi)^{-1}aa^{\top}V(\pi)^{-1}}{(1-x)^2\left(1+\frac{1}{1-x}||a||^2_{V(\pi)^{-1}}\right)} = \frac{1}{1-x}\left(V(\pi)^{-1} - \frac{x V(\pi)^{-1} aa^{\top} V(\pi)^{-1}}{1-x + ||a||^2_{V(\pi)^{-1}}} \right).
\end{align*}

Consider any arm $a'.$ Then,
\begin{align*}
    ||a'||^2_{V(\wt{\pi})^{-1}} = \frac{1}{1-x} ||a||^2_{V(\pi)^{-1}} - \frac{x}{1-x} \cdot\frac{(a^{\top} V(\pi)^{-1}a')^2}{1-x + ||a||^2_{V(\pi)^{-1}}} \leq \frac{1}{1-x} ||a||^2_{V(\pi)^{-1}}.
\end{align*}

Note that we apply this transformation at most $O(d\log\log d)$ times.
Let $\wh{\pi}$ be the distribution we end up with.
We see that
\begin{align*}
    ||a'||^2_{V(\wh{\pi})^{-1}} \leq \left( \frac{1}{1-x}\right)^{c d\log\log d} ||a||^2_{V(\pi)^{-1}} \leq 2 \left( \frac{1}{1-x}\right)^{c d\log\log d}  d.
\end{align*}
Notice that there is a constant $c'$ such that when $x = c'/d\log d$
we have that $\left(\frac{1}{1-x}\right)^{c d\log\log d} \leq 2.$
Moreover, notice that the mass of every arm is at least $x(1-x)^{|\calC|} \geq x - |\calC|x^2 = c'/(d\log (d)) - c''d\log\log d/(d^2 \log^2 (d)) \geq c/(d\log (d)),$ for some absolute numerical constant $c > 0.$ This concludes the claim. 
\end{proof}

\subsection{The Proof of \Cref{prop:lse-rep}}
\label{proof:lse-rep}
\begin{proof}
The proof works when we can treat $\Omega(\lceil d\log(1/\delta)\pi(a)/\varepsilon^2 \rceil)$
as $\Omega(d\log(1/\delta)\pi(a)/\varepsilon^2)$, i.e., as long as $\pi(a) = \Omega(\varepsilon^2/d\log(1/\delta))$. In the regime we are in, this point is handled thanks to \Cref{prop:effective-support}.
Combining the following proof with \Cref{prop:effective-support}, we can obtain the desired result.

We underline that we work in the fixed design setting: the arms $a_i$ are deterministically chosen independently of the rewards $r_i$.
Assume that the core set of \Cref{lemma:core set} is the set $\calC$. 
Fix the multi-set $S = \{ (a_i, r_i) : i \in [M] \}$, where each arm $a$ lies in the core set and is pulled $n_a = \Theta(\pi(a) d\log(d)\log(|\calC|/\delta)/\varepsilon^2)$ times\footnote{Recall that $\pi(a) \geq c/(d \log(d))$, for some constant $c > 0$, so the previous expression is $\Omega(\log(\delta/|\calC|)/\varepsilon^2).$}.
Hence, we have that $$M = \sum_{a \in \calC } n_a = \Theta\left(d \log(d)\log(|\calC|/\delta)/\varepsilon^2\right).$$
Let also $V = \sum_{i \in [M]} a_i a_i^\top$.
The least-squares estimator can be written as
\[
\theta_{\mathrm{LSE}}^{(\varepsilon)} = V^{-1} \sum_{i \in [M]} a_i r_i =
V^{-1} ~ \sum_{a \in \calC} a \sum_{i \in [n_a]} r_i(a)\,,
\]
where each $a$ lies in the core set (deterministically) and $r_i(a)$ is the $i$-th reward generated independently by the linear regression process $\< \theta^\star, a\> + \xi$, where $\xi$ is a fresh zero mean sub-gaussian random variable.
Our goal is to reproducibly estimate the value $\sum_{i \in [n_a]} r_i(a)$ for any $a$. This is sufficient since two independent executions of the algorithm share the set $\calC$ and $n_a$ for any $a$.
Note that the above sum is a random variable. In the following, we condition on the high-probability event that the average reward of the arm $a$ is $\varepsilon$-close to the expected one, i.e., the value $\<\theta^\star, a\>$.
This happens with probability at least $1-\delta/(2|\calC|)$, given $\Omega(\pi(a) d\log(d)\log(|\calC|/\delta)/\varepsilon^2)$ samples from arm $a \in \calC$.
In order to guarantee replicability, we will apply a result 
from \cite{impagliazzo2022reproducibility}. Since we will union bound over all arms in the core set and $|\calC| = O(d \log\log(d))$ (via \Cref{lemma:core set}), we will make use of a $(\rho/|\calC|)$-replicable algorithm that gives an estimate $v(a) \in \reals$ such that
\[
\left | \< \theta^\star, a\> - v(a) \right| \leq \tau\,,
\]
with probability at least $1 - \delta/(2|\calC|)$. 
For $\delta < \rho,$ the algorithm uses $$S_a = \Omega\left(d^2\log(d/\delta) \log^2\log(d) \log\log\log (d)/ (\rho^2 \tau^2)\right)$$ many samples from the linear regression with fixed arm $a \in \calC$.
Since we have conditioned on the randomness of $r_i(a)$ for any $i$, we get
\[
\left | \frac{1}{n_a} \sum_{i \in [n_a]} r_i(a) - v(a) \right| \leq \left | \frac{1}{n_a} \sum_{i \in [n_a]} r_i(a) - \< \theta^*,  a \>  \right| + \left| \< \theta^*,  a \> - v(a) \right|  \leq \varepsilon + \tau\,,
\]
with probability at least $1 - \delta/(2|\calC|)$.
 Hence, by repeating this approach for all arms in the core set, we set $\theta_{\mathrm{SQ}} = V^{-1} \sum_{a \in \calC} a ~n_a~ v(a)$.
Let us condition on the randomness of the estimate $\theta_{\mathrm{LSE}}^{(\varepsilon)}$.
We have that
\[
\sup_{a' \in \calA} |\< a', \theta_{\mathrm{SQ}} - \theta^\star\>|
\leq 
\sup_{a' \in \calA} |\< a', \theta_{\mathrm{SQ}} - \theta_{\mathrm{LSE}}^{(\varepsilon)}\>|
+
\sup_{a' \in \calA} |\< a', \theta_{\mathrm{LSE}}^{(\varepsilon)} - \theta^\star\>|\,.
\]
Note that the second term is $\varepsilon$ with probability at least $1-\delta$ via \Cref{lem:action elimination linear bandits}. Our next goal is to tune the accuracy $\tau \in (0,1)$ so that the first term yields another $\varepsilon$ error.
For the first term, we have that
\[
\sup_{a' \in \calA} 
|\< a', \theta_{\mathrm{SQ}} - \theta_{\mathrm{LSE}}^{(\varepsilon)} \>|
\leq 
\sup_{a' \in \calA} 
\left |\<a', V^{-1} \sum_{a \in \calC} a ~ n_a ~ (\varepsilon + \tau) \> \right|
\]
Note that $V = \frac{C d \log(d)\log(|\calC|/\delta)}{\varepsilon^2} \sum_{a \in \calC} \pi(a) a a^\top$ and so $V^{-1} = \frac{\varepsilon^2}{C d \log(d)\log(|\calC|/\delta)} V(\pi)^{-1},$ for some absolute constant $C > 0$.
This implies that
\[
\sup_{a' \in \calA} 
|\< a', \theta_{\mathrm{SQ}} - \theta_{\mathrm{LSE}}^{(\varepsilon)} \>|
\leq 
(\varepsilon + \tau) 
\sup_{a' \in \calA} 
\left | \left\< a', \frac{\varepsilon^2}{C d \log(d)\log(|\calC|/\delta)} V(\pi)^{-1} \sum_{a \in \calC} \frac{C d \log(d)\log(|\calC|/\delta)\pi(a)}{\varepsilon^2} a  \right\> \right|\,.
\]
Hence, we get that
\[
\sup_{a' \in \calA} 
|\< a', \theta_{\mathrm{SQ}} - \theta_{\mathrm{LSE}}^{(\varepsilon)} \>|
\leq 
(\varepsilon + \tau) 
\sup_{a' \in \calA} 
\left | \left\< a', V(\pi)^{-1} \sum_{a \in \calC} \pi(a) a  \right\> \right|\,.
\]
Consider a fixed arm $a' \in \calA.$ Then,
\begin{align*}
    \left | \left\< a', V(\pi)^{-1} \sum_{a \in \calC} \pi(a) a  \right\> \right| & \leq \sum_{a \in \calC} \pi(a) \left| \<a', V(\pi)^{-1}a \> \right| \\
    &\leq \sum_{a \in \calC} \pi(a) \left ( 1+\left| \<a', V(\pi)^{-1}a \> \right|^2 \right)\\
    &=  1 + \sum_{a \in \calC} \pi(a) \left| \<a', V(\pi)^{-1}a \> \right|^2\\
    &= 1 + ||a'||_{V(\pi)^{-1}}^2 \\
    & \leq 4d + 1\,,
\end{align*}
where the last inequality follows from the fact that $\pi$ is a $4$-approximation of the $G$-optimal design.
Hence, in total, by picking $\tau = \varepsilon$, we get that
\[
\sup_{a' \in \calA} |\< a', \theta_{\mathrm{SQ}} - \theta^\star\>|
\leq 11 d\varepsilon\,.
\]
Thus, for any $\varepsilon > 0$, 
 the total number of pulls of each arm is
\[
\Omega\left(d^4\log(d/\delta) \log^2\log(d) \log\log\log (d)/ (\rho^2 \varepsilon^2)\right)\,,
\]
to get 
\[
\sup_{a' \in \calA} |\< a', \theta_{\mathrm{SQ}} - \theta^\star\>|
\leq \varepsilon\,.
\]
\end{proof}

\section{Computational Performance of \Cref{algo:reproducible-infinite-arm-mean-estimation}}
\label{appendix:computational}

In this appendix, we discuss the barriers towards computational efficiency regarding \Cref{algo:reproducible-infinite-arm-mean-estimation}. The reasons why \Cref{algo:reproducible-infinite-arm-mean-estimation} is computationally inefficient are the following: (a) we have to compute the arm in the set of active arms that has maximum correlation with the estimate $\wh{\theta}_i$, (b) we have to eliminate arms based on this value and (c) we have to run at each batch the Frank-Wolfe algorithm (or some other optimization method needed for \Cref{lem:action elimination linear bandits}) in order to obtain an approximate G-optimal design. As a minimal assumption in what follows, we focus on the case where the action set $\calA$ is convex and we have access to a separation oracle for it.

Note that executing both (a) and (b) naively requires time exponential in $d$. However, on the one side arm elimination (issue (b)) reduces to finding the intersection of the current active set with a halfspace $\calH$ whose normal vector is $\wh{\theta}^i$ and the threshold is, roughly speaking, the maximum correlation. This maximum correlation can also be computed efficiently. Finding an arm with (almost) maximum correlation relates to the problem of finding a point that maximizes a linear objective under the constraint that the point lies in the intersection of the active arm set with some linear constraints. Thus, we can use the ellipsoid algorithm to implement this step. 

The above discussion deals with issues (a) and (b) and, essentially, states that even with infinitely many actions, one could implement these steps efficiently. 
We now focus on issue (c).
The Frank-Wolfe method first requires a proper initialization.
As mentioned in \cite{lattimore2020bandit}, if the starting point is chosen to be the uniform distribution over $\calA'$, then the number of iterations before getting a 2-approximate optimal design is roughly $\wt{O}(d)$. The issue is that since $\calA'$ is exponential in $d$, it is not clear how to work with such an initialization efficiently.
Notably there is a different initialization \cite{fedorov2013theory, lattimore2020learning-init} with support $O(d)$
for which the method 
runs in $O(d \log\log(d))$ rounds~(see Note 3 at Section 21.2 of \cite{lattimore2020bandit} and \cite{lattimore2020learning-init}). There are two issues: first, one requires an oracle to provide this good initialization. Second, each iteration of the Frank-Wolfe method (with current design guess $\pi)$ requires computing a point in the current active set with maximum $V(\pi)^{-1}$-norm. 
As noted in \cite{todd2016minimum}, a good initialization for finding a G-optimal design, i.e., a minimum  volume enclosing ellipsoid (MVEE) should
be sufficiently sparse (compared to the number of active arms) and
assign positive mass to arms that correspond to extreme points, i.e., points that are close to the border of MVEE. The work of \cite{kumar2005minimum} provides an initial core set that depends only on $d$ but not on the number of points. 
The algorithm works as follows: it runs for $d$ iterations and, in each round, it adds 2 arms into the core set. Initially, we set the core set $\calC_0 = \emptyset$ and let $\Psi = \{ 0 \}$. In each iteration $i \in [d]$, the algorithm draws a random direction $v_i$ in the orthogonal complement of $\Psi$ (this step is replicable thanks to the shared randomness) and computes the vectors in the active arms' set with the maximum and the minimum correlation with $v_i$, say $a_i^+, a_i^-$. It then extends $\calC_0 \gets \calC_0 \cup \{a_i^+,a_i^-\}$ and sets $\Psi \gets \mathrm{span}(\Psi, \{a_i^+ - a_i^-\}).$ Hence, the runtime of this algorithm corresponds to the runtime of the tasks $\max_{a \in \calA'} \< a, v_i\>$ and $\min_{a \in \calA'} \< a, v_i\>$. One can efficiently approximate these values using the ellipsoid algorithm and hence efficiently initialize the Frank-Wolfe algorithm as in \cite{todd2016minimum} (e.g., set the weights uniformly $1/(2d))$.

Our second challenge deals with finding a point in the active arm set with maximum $V(\pi)^{-1}$-norm for some current guess $\pi$.
Even if the current active set is a polytope, finding an exact norm maximizer is NP-hard \cite{freund1985complexity,mangasarian1986variable}\footnote{In fact, even finding a constant factor approximation, for some appropriate constant, is NP-hard \cite{bellare1993complexity}.}. Hence, one should focus on efficient approximation algorithms. We note
that even a $\text{poly}(d)$-approximate maximizer
is sufficient to get $\wt{O}(\text{poly}(d)\sqrt{T})$ regret. Such an algorithm for polytopes, which gets an $1/d^2$-approximation, is provided in \cite{ye1992affine,vavasis1993polynomial}.

As a general note, if we assume that we have access to an oracle $\calO$ that computes a 2-approximate G-optimal design in time $T_{\calO}$, then our \Cref{algo:reproducible-infinite-arm-mean-estimation} runs in time polynomial in $T_{\calO}$.

\end{document}